\documentclass{article} 
\usepackage{amssymb}
\usepackage{iclr2024_conference,times}
\usepackage{dsfont}
\usepackage{booktabs} 
\usepackage{graphicx}
\usepackage{tikz}
\usepackage{xspace}
\usepackage{enumitem}
\usepackage{varwidth}
\usepackage{multirow}
\usepackage{wrapfig}
\usetikzlibrary{fit}
\usetikzlibrary{calc}
\usetikzlibrary{shapes}
\usetikzlibrary{arrows.meta}
\usetikzlibrary{decorations.text}
\usetikzlibrary{decorations.pathreplacing}
\usetikzlibrary{decorations,calligraphy}

\definecolor{myorange}{RGB}{247,214,157}
\definecolor{burstred}{RGB}{235,50,35}
\definecolor{burstgreen}{RGB}{140,206,89}
\definecolor{burstblue}{RGB}{75,174,234}

\makeatletter
\DeclareRobustCommand\onedot{\futurelet\@let@token\@onedot}
\def\@onedot{\ifx\@let@token.\else.\null\fi\xspace}

\def\ie{\emph{i.e}\onedot}

\makeatother



\pdfoutput=1
\usepackage{amsmath,amsfonts,bm}
\usepackage{amsthm}









\def\eqref#1{equation~\ref{#1}}









\def\1{\bm{1}}










\DeclareMathAlphabet{\mathsfit}{\encodingdefault}{\sfdefault}{m}{sl}
\SetMathAlphabet{\mathsfit}{bold}{\encodingdefault}{\sfdefault}{bx}{n}











\newcommand{\E}{\mathbb{E}}
\newcommand{\V}{\mathbb{V}}

\newcommand{\R}{\mathbb{R}}

\newcommand{\Var}{\mathrm{Var}}



\DeclareMathOperator*{\argmax}{arg\,max}
\DeclareMathOperator*{\argmin}{arg\,min}

\DeclareMathOperator{\sign}{sign}

\newtheorem{prop}{Proposition}
\newtheorem*{prop*}{Proposition} 
\newtheorem{thm}{Theorem}
\newtheorem*{thm*}{Theorem}

\newtheorem{rmk}{Remark}
\newtheorem{corol}{Corollary}

\newtheorem{lemma}{Lemma}

\newtheorem*{example*}{Example}

\newcommand\norm[1]{\lVert#1\rVert}

\usepackage{todonotes}

\usepackage{hyperref}
\usepackage{url}
\usepackage{algorithm}
\usepackage{algpseudocode}

\ExplSyntaxOn
\cs_new:Npn \expandableinput #1
  { \use:c { @@input } { \file_full_name:n {#1} } }
\ExplSyntaxOff

\title{The Lipschitz-Variance-Margin Tradeoff \\ for Enhanced Randomized Smoothing}


\author{Blaise Delattre$^{1,2}$, Alexandre Araujo$^{3}$, Quentin Barthélemy$^{1}$ and Alexandre Allauzen$^{2,4}$ \\[0.2cm]
$^1$ Foxstream, Vaulx-en-Velin, France \\
$^2$ Miles Team, LAMSADE, Université Paris-Dauphine, PSL University, Paris, France \\
$^3$ ECE, New York University, NY, USA \\
$^4$ ESPCI PSL, Paris, France
}
%

\iclrfinalcopy 
\begin{document}

\maketitle
\begin{abstract}
Real-life applications of deep neural networks are hindered by their unsteady predictions when faced with noisy inputs and adversarial attacks. The certified radius in this context is a crucial indicator of the robustness of models. However how to design an efficient classifier with an associated certified radius? Randomized smoothing provides a promising framework by relying on noise injection into the inputs to obtain a smoothed and robust classifier. In this paper, we first show that the variance introduced by the Monte-Carlo sampling in the randomized smoothing procedure estimate closely interacts with two other important properties of the classifier, \textit{i.e.} its Lipschitz constant and margin.  More precisely, our work emphasizes the dual impact of the Lipschitz constant of the base classifier, on both the smoothed classifier and the empirical variance. To increase the certified robust radius, we introduce a different way to convert logits to probability vectors for the base classifier to leverage the variance-margin trade-off. We leverage the use of Bernstein's concentration inequality along with enhanced Lipschitz bounds for randomized smoothing. Experimental results show a significant improvement in certified accuracy compared to current state-of-the-art methods. Our novel certification procedure allows us to use pre-trained models with randomized smoothing, effectively improving the current certification radius in a zero-shot manner.
\end{abstract}

\section{Introduction}
Deep neural networks are susceptible to adversarial attacks, which are small, carefully crafted perturbations that lead the model to make erroneous predictions \citep{szegedy_intriguing_2014}. This vulnerability is a critical concern in applications requiring high reliability and safety, such as autonomous vehicles and medical diagnostics. Various defense mechanisms, including certified defenses like Lipschitz continuity \citep{cisse_parseval_2017,tsuzuku_lipschitz-margin_2018}   and randomized smoothing (RS) \citep{cohen_certified_2019}, have been proposed to mitigate these risks. Among the metrics used to evaluate these defenses, the certified robust radius serves as an important measure for quantifying model resilience against adversarial perturbations \citep{tsuzuku_lipschitz-margin_2018}. The certified robust radius measures the amount of perturbation that can be added to an input \( x \) while keeping the stability of the decision  \( y \), \textit{i.e} the label in a classification task.
This essentially acts as a certified measure of robustness for an individual input.
Similarly, the prediction margin $M(f( x), y):= \max (0, f_{y}(x) - \max_{k \neq y} f_{k}(x))$ acts as an indicator of the confidence of the  \emph{base classifier} $f$ in assigning the label \( y \) to the input \( x \).
A larger prediction margin correlates with increased confidence in the prediction, even if the input incurs some perturbations.

The concept of Lipschitz continuity augments this framework by introducing the Lipschitz constant which bounds the sensitivity of the \emph{base classifier} to input perturbations.
A smaller Lipschitz constant signifies that the function \emph{base classifier} exhibits slower variations in its output with respect to changes in its input: $\norm{f(x + \tau) - f(x)} \leq L(f) \norm{\tau}$.
\cite{tsuzuku_lipschitz-margin_2018} gathers these elements to provide a bound on the certified robust radius that encompasses both the prediction margin and the Lipschitz constant. This combined measure controls the trade-off between the classifier's prediction margin and its sensitivity to input changes. 
Upon the introduction of RS, 
\cite{li_second-order_2018,lecuyer_certified_2019,cohen_certified_2019, salman_provably_2020, levine_certifiably_2019} use the \emph{smoothed classifier} obtained by convolving Gaussian density with the \emph{base classifier}.  \cite{salman_provably_2020} proved that the \emph{smoothed classifier} exhibits Lipschitz continuity which depends on the Gaussian variance.
RS methods estimate a \emph{smoothed classifier} by injecting noise on the input. The resulting procedure is then probabilistic and approximate inference is carried out with Monte-Carlo (MC) methods. To account for the randomness introduced through MC, one can use an $\alpha$-coverage confidence interval (\emph{Clopper-Pearson}) as in \citep{cohen_certified_2019, salman_provably_2020}, or concentration inequality (\emph{Hoeffding’s inequality}) as in \citep{lecuyer_certified_2019, levine_certifiably_2019} the probability to not predict the good label \ie to control the risk induced by the randomness.
A shift is thus necessary to lower-bound the prediction margin, thus yielding a more conservative but also more reliable estimate of the robust radius.
In the last step of traditional classification, the decision usually relies on the $\argmax$ function. This can be seen as a map of the network's output, putting all the probability mass on one corner of the simplex. However, in the context of RS, there is a detour: 
RS or the smoothed classifier conducts an $\argmax$ map on the expectation of the $\argmax$ applied to the base and deterministic classifier.   

\begin{figure*}[t]
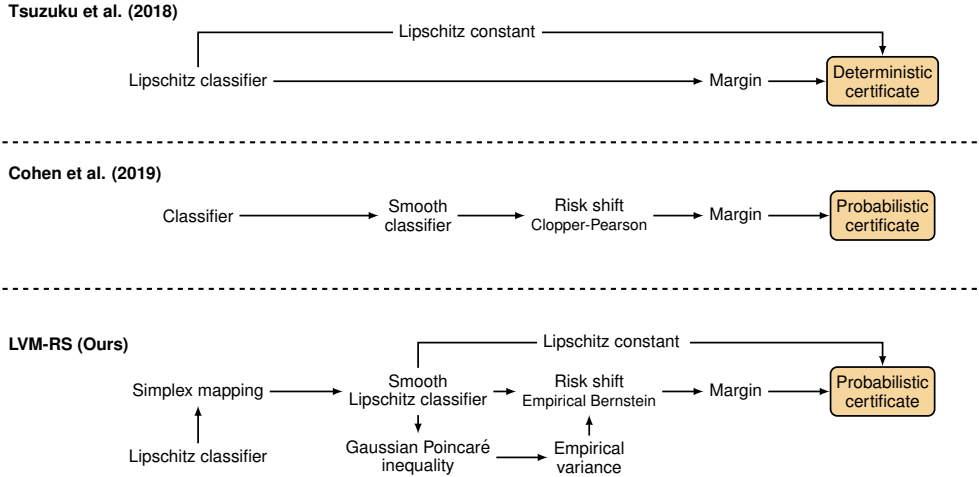

  \centering
  \scalebox{0.65}{%
    \expandableinput{figures/figurev3}
  }%
  \caption{ 
\textbf{First}, \cite{tsuzuku_lipschitz-margin_2018} proposes a deterministic certificate starting from a Lipschitz \emph{base subclassifier}, followed by margin calculation and radius binding. 
\textbf{Second}, \cite{cohen_certified_2019} introduces a \emph{base subclassifier} to create a \emph{smoothed subclassifier}. The risk factor $\alpha$ is then estimated using the Clopper-Pearson interval to provide a probabilistic certificate. 
\textbf{Third}, our method  (the \emph{Lipschitz-Variance-Margin Randomized Smoothing} or LVM-RS) extends a smoothed classifier constructed with a Lipschitz base classifier composed with a map which transforms logit to probability vector in simplex. The regularization of the Lipschitz constant is motivated by the Gaussian-Poincar\'e inequality in Theorem~\ref{prop:gaussian_poincarre_inequality}. The empirical variance is applied to the Empirical Bernstein inequality in Proposition~\ref{prop:empirical_bernstein_inequality} to accommodate for the risk factor $\alpha$, in the same flavor as in \cite{levine_certifiably_2019}. The pipeline also ends with a probabilistic certificate, 
similar to the methodology used in \cite{cohen_certified_2019}'s certified approach.}
  \label{figure:mind_map}
\end{figure*}

In this work, we revisit the whole decision process of RS to better leverage and disentangle the interplay of all these components. We propose to better leverage the margin-variance tradeoff with alternative simplex maps \ie function to map logits to probability vector. More importantly, we investigate how Lipschitz regularity impacts randomized smoothing techniques, emphasizing its effects on the certified robust radius.
The regularity of the \emph{smoothed classifier} depends on the Lipschitz property of the \emph{base classifier} and the variance of the Gaussian convolution which governs the induced level of smoothness.
Therefore, our research in this domain encompasses following contributions:
\begin{itemize}
    \item We use the \emph{Gaussian-Poincaré's inequality} to explain the impact of the Lipschitz constant of the \emph{base classifier} on MC variance, which ultimately affects its reliability, see Section~\ref{section:low_lip_low_variance}.
    This motivates the use of the \emph{Empirical Bernstein inequality} 
    which integrates empirical variance to control risk $\alpha$, see Section~\ref{section:statistical_risk_management_in_rs}.
    \item We introduce the $r$-simplex for RS procedure which allows for better-suited margins and Lipschitz constant of \emph{smoothed classifier}, see Section~\ref{section:r_simplex}.
    We establish a novel limit on the Lipschitz constant for the \emph{smoothed classifier}, detailing its reliance on noise variance, simplex mass $r$, and the \emph{base classifier}'s Lipschitz constant, 
    whilst clarifying the connection between robustness certificates produced through Randomized Smoothing and the one from deterministic approaches as in \cite{tsuzuku_lipschitz-margin_2018}, see Section~\ref{section:exploring_interplay_btw_lip_and_rs}.
    \item We present the \emph{Lipschitz-Variance-Margin Randomized Smoothing} (LVM-RS) procedure, presented in Figure~\ref{figure:mind_map}, which balances MC variance and decision margin and controlled the MC empirical variance through the different simplex maps. This procedure demonstrates state-of-the-art results on the CIFAR-10 and ImageNet datasets, see Section~\ref{section:lvm_rs}.
\end{itemize}

\section{Background \& Related Work}

The robustness of machine learning classifiers remains an active area of research, with various strategies being proposed and evaluated.
In this section, we describe significant contributions in the domains of Lipschitz-based robust classifiers, randomized smoothing, and the role of margins in the robustness of classifiers.

\subsection{Notation}

Consider a $d$-dimensional input data point $x \in \mathcal{X} \subset \R^d$ and its associated label $y \in \mathcal{Y} = \{1, \dots, c\}$, where $\mathcal{Y}$ encompasses $c$ distinct classes. 
The $(c-1)$-dimensional simplex is defined as  $\Delta^{c-1} = \left\{ p \in \R^c ~|~ \mathbf{1}^\top p= 1, p \geq 0 \right\}$, 
and let $s : \R^c \mapsto \Delta^{c-1}$ denote the map onto this simplex. Usually, $s$ corresponds to the  $\mathrm{softmax}$ or $\mathrm{hardmax}$ map. 
For a logit vector $z \in \R^c$, its map on $\Delta^{c-1}$ is denoted by $s(z)$. This map can use a temperature $t$ such that $s^{t}(z) := s(z/t)$. For instance, the $\mathrm{hardmax}$ corresponds for the component $k$ to $s_k(z) = \mathds{1}_{\argmax_{i} z_i = k}$ which puts all the mass on the maximum value coordinate. This map can be obtained through $\mathrm{softmax}$ with low temperature $t \to 0$. 
A function $f: \mathcal{X} \mapsto \R^c$ is designated as the subclassifier before the map $s$. The main classifier can be formulated as $F(x) := \argmax_{k \in \mathcal{Y}} s_k(f(x))$, resulting in the predicted label $\hat{y} = F(x)$. While $F$ offers predictions for an input $x$, it doesn't convey the confidence level associated with these predictions. 

The confidence level surrounding a classifier's decision boundary for a particular input $x$ is captured by the certified radius, denoted as $R(f, x)$. This radius represents the maximum permissible level of perturbation, $\epsilon$, that can be introduced to input $x$ without altering its classification output to remain consistent with its true label. A larger certified radius is indicative of a classifier's robustness against input perturbations. Its formal expression in the context of the $\ell_2$-norm is:
\begin{align*}
\label{eq:radius}
R(f, x) := \min_{\epsilon} \left\{ \epsilon > 0  ~|~ \exists ~ \tau \in B_2(0, \epsilon), \ \argmax_{k} f_{k}(x + \tau) \neq y \right\}  \ ,
\end{align*}
where $B_2(0, \epsilon) = \{ \tau \in \R^d ~|~ \norm{\tau}_2 \leq \epsilon\}$.

The local Lipschitz constant w.r.t the $\ell_2$-norm of a function $f$ 
over an open set $\mathcal{B}$ is defined as follows:
\begin{equation}
\label{eq:def_lip}
  L(f, \mathcal{B}) = \sup_{\substack{x, x' \in \mathcal{B} \\ x \neq x'}} \frac{\lVert f(x) - f(x') \rVert_2}{\lVert x - x' \rVert_2} \enspace .
\end{equation}
And if $L(f, \mathcal{B})$ exists and is finite, we say that $f$ is locally Lipschitz over $\mathcal{B}$. If $\mathcal{B}=\mathcal{X}$, we note $L(f, \mathcal{X}) = L(f)$ and call it Lipschitz constant.

\subsection{Lipschitz Continuity in Classifier Design}
The concept of Lipschitz continuity has been recognized for its intrinsic value in designing robust classifiers.
By ensuring that a function possesses a bounded Lipschitz constant, it can be ascertained that small perturbations in the input don't result in large variations in the output.

\begin{prop}[\citet{tsuzuku_lipschitz-margin_2018}] 
\label{proposition:tsuzuku}
Given a Lipschitz continuous subclassifier $f$ for the $\ell_2$-norm, and given a perturbation level $\varepsilon > 0$, $x \in \mathcal{X}$, and $y \in \mathcal{Y}$ as the label of $x$. If the margin $M(f(x),y)$ at input $x$ meets the condition
$M(f( x), y) > \sqrt{2} L(f) \varepsilon$, 
%
then for every $\tau$ such that $\lVert \tau \rVert_2 \leq \varepsilon$, we have $\argmax_{k} f_k(x + \tau) = y$.
\end{prop}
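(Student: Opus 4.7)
The plan is to show that for every $\tau$ with $\|\tau\|_2\le\varepsilon$, the $y$-coordinate of $f(x+\tau)$ dominates each competing coordinate. Concretely, for every $k\ne y$ I would introduce the auxiliary function $g_k(u) := f_y(u) - f_k(u)$ and argue that positivity of $g_k$ on the ball $B_2(x,\varepsilon)$ is equivalent to the conclusion. So the entire proof reduces to bounding $g_k(x+\tau)$ from below by $g_k(x) - \sqrt{2}\,L(f)\,\varepsilon$.

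The key technical step is to control the Lipschitz constant of $g_k$ in terms of $L(f)$, where the vector-valued Lipschitz constant $L(f)$ is for the $\ell_2\!\to\!\ell_2$ operator. Writing $g_k(u) = \langle e_y - e_k,\, f(u)\rangle$ with $e_y,e_k$ the standard basis vectors, Cauchy–Schwarz gives
\begin{equation*}
    |g_k(u) - g_k(u')| = |\langle e_y - e_k,\, f(u) - f(u')\rangle| \le \|e_y - e_k\|_2 \,\|f(u) - f(u')\|_2 \le \sqrt{2}\, L(f)\, \|u - u'\|_2,
\end{equation*}
since $\|e_y - e_k\|_2 = \sqrt{2}$. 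This is precisely where the $\sqrt{2}$ factor in the proposition enters, and it is the only delicate point: a naive coordinatewise argument would instead yield a factor of $2$, losing tightness.

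From this Lipschitz bound, for any $\tau$ with $\|\tau\|_2\le\varepsilon$ I get $g_k(x+\tau) \ge g_k(x) - \sqrt{2}\,L(f)\,\varepsilon$. By the definition of the margin, $g_k(x) = f_y(x) - f_k(x) \ge f_y(x) - \max_{j\ne y} f_j(x) = M(f(x),y)$ for every $k\ne y$ (assuming $f_y(x)$ is already the largest component, which is implicit in the hypothesis $M(f(x),y) > 0$). Plugging in the assumption $M(f(x),y) > \sqrt{2}\,L(f)\,\varepsilon$ yields $g_k(x+\tau) > 0$, i.e.\ $f_y(x+\tau) > f_k(x+\tau)$.

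Since this strict inequality holds for every $k\ne y$ and every admissible $\tau$, the argmax of $f(x+\tau)$ is $y$, which is the desired conclusion. The main obstacle is really just recognizing that the relevant quantity is the Lipschitz constant of the scalar difference $f_y - f_k$ and extracting the sharp $\sqrt{2}$ constant via Cauchy–Schwarz against $e_y-e_k$; the rest is a one-line Lipschitz estimate followed by substitution of the margin hypothesis.
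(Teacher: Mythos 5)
Your proof is correct, and it is the standard argument for this result: the paper itself does not reprove Proposition~\ref{proposition:tsuzuku} (it is imported from \citet{tsuzuku_lipschitz-margin_2018}), whose original derivation likewise bounds the Lipschitz constant of the scalar difference $f_y - f_k = \langle e_y - e_k, f(\cdot)\rangle$ by $\lVert e_y - e_k\rVert_2\,L(f) = \sqrt{2}\,L(f)$ via Cauchy--Schwarz. Your handling of the margin hypothesis (using $M(f(x),y)>0$ to ensure $f_y(x)$ is the strict maximum, then concluding strict positivity of $g_k$ on the ball) is also exactly right.
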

Reworking this proposition, the certified radius for a subclassifier $f$
at $x, y$ can be expressed as:
\begin{align}
    \label{eq:radius_tsuzuku}
    R_1(f, x, y) := \frac{M(f(x), y)}{\sqrt{2} L(f)} \ .
\end{align}
This inherent property positions Lipschitz continuity as a strong defense mechanism against adversarial attacks.
Recent efforts have focused on creating Lipschitz by design classifiers, incorporating Lipschitz constraints either during the training phase via regularization or through specific architectural designs \cite{tsuzuku_lipschitz-margin_2018, anil_sorting_2019, trockman_orthogonalizing_2021, singla_skew_2021, meunier_dynamical_2022, araujo_unified_2022, wang_direct_2023}.
Some works \citep{araujo_lipschitz_2021, singla_fantastic_2021, delattre_efficient_2023} provide soft Lipschitz constant regularization of individual layers.
%
Other works consider local Lipschitz around input points, \cite{huang_training_2021, muthukumar_adversarial_2023}.
However, there is a trade-off between the Lipschitz of the network and performance for the same level of margins, as depicted in \citet[Appendix N]{bethune_pay_2022}.
Instead of constraining the Lipschitz constant by design, methods commonly found in RS strategies, have shown better overall performance to procure certified robustness as they allow regular neural network architecture to be used.
\subsection{Randomized Smoothing}
First introduced by \cite{lecuyer_certified_2019} and later developed in \cite{li_second-order_2018, cohen_certified_2019, salman_provably_2020} RS's central philosophy is to convolve the \emph{base classifier} $F$ with a Gaussian distribution resulting in an \emph{smoothed classifier} $\tilde{F}$ with increased robustness against adversarial inputs.
\begin{equation*}
    \tilde{F}(x) :
    = \argmax_k \mathbb{P}_{\delta\sim {\cal N}(0, \sigma^2 I)}\left(F(x + \delta) = k\right) 
    = \argmax_k \E_{\delta\sim {\cal N}(0, \sigma^2 I)}
    [\mathrm{hardmax}_k(f(x))] \ .
\end{equation*}
For $s = \mathrm{hardmax}$, we define the smoothed sub-classifier 
$\tilde{f} : \mathcal{X} \mapsto \R^c$ as follows: 
\begin{align*}
  \tilde{f}_k(x) := \E_{\delta\sim {\cal N}(0, \sigma^2 I)} \left[s_k \left(f(x + \delta)\right)\right]  \ ,
\end{align*}
and, in this article, $\tilde{F}$ is generalized to any simplex map $s$:
\begin{equation*}
   \tilde{F}_s(x) 
   = \argmax_k \E_{\delta\sim {\cal N}(0, \sigma^2 I)} \left[s_k \left(f(x + \delta) \right)\right] 
   = \argmax_k \tilde{f}_k(x) \ ,
\end{equation*}
where $s$ corresponds in this context to the $\mathrm{hardmax}$ map on the simplex, applied to the \emph{base subclassifier} $f$ through the RS process \citep{cohen_certified_2019, salman_provably_2020}
\footnote{\cite{levine_certifiably_2019} do not use a simplex map but normalized outputs in $[0, 1]$ for another task.}. 
We note $p = \tilde{f}(x)$ and suppose $p$ is sorted in decreasing order.
Certified radius makes  the mapping \(\Phi^{-1} \circ \tilde{f}_k\) intervene, where \(\Phi^{-1}\) represents the quantile function of the Gaussian distribution. We suppose here that the  classifier $\tilde{F}_s$ gives the good answer, the certified radius writes as:
\begin{align}
\label{eq:certified_radius_randomized_smoothing}
    R_2(p) = \frac{\sigma}{2} \left(
    \Phi^{-1}(p_1) - \Phi^{-1}(p_2) \right) \ .
\end{align}
In most RS approaches, the bound on $p_2 \leq 1 - p_1$ is used, it degrades the certified radius especially when the smoothing noise $\sigma$ is high, see Appendix \ref{section:relation_btw_radii}.
 RS uses MC method to estimate $p$ by $\hat{p} = \frac{1}{n }\sum_{i=1}^n s(f(x+ \delta_i))$, where $\delta_i$ are sampled from a Gaussian distribution.
As the RS method is probabilistic, there is a risk $\alpha$ that the method returns the wrong answer \citep{cohen_certified_2019}.
Following \cite{levine_certifiably_2019}, a confidence interval bound or concentration inequality is used to provide a $\mathrm{shift}(S_n(\hat{p}_k), \alpha, n)$ such that:
\begin{align}
\label{eq:def_risk_shift}
 \mathbb{P} \left( \bigcap_{k=1}^c \left(
 \hat{p}_k - \mathrm{shift}(S_n(\hat{p}_k), \alpha, n)
 \leq p_k \leq
 \hat{p}_k + \mathrm{shift}(S_n(\hat{p}_k), \alpha, n) 
 \right) \right) \leq 1 - \alpha  \ ,
\end{align}
where $S_n$ is the sample variance.
Then, the risk-corrected probabilities are obtained
\begin{align*}
 \Bar{p} = \left(
 \hat{p}_1 - \mathrm{shift}(S_n(\hat{p}_k), \alpha, n), \dots, 
 \hat{p}_2 + \mathrm{shift}(S_n(\hat{p}_2), \alpha, n), \dots, 
 \hat{p}_c + \mathrm{shift}(S_n(\hat{p}_c), \alpha, n) 
 \right) \ ,
\end{align*}
supposing that the $(\Bar{p}_k)_{k=2}^c$ are sorted in decreasing order, $\Bar{p}_1$ and $\Bar{p}_2$ are used to compute the risk corrected radius $R_2(\Bar{p})$ which holds with probability $1 - \alpha$.

The drawback of RS is the sampling cost, as MC error reduces with $\frac{1}{\sqrt{n}}$ with $n$ the number of samples.
To tackle this issue the work of \cite{horvath_boosting_2021} leverages an ensemble of classifiers to reduce the variance and proposes an adaptive sampling procedure to verify whether a target-certified radius is reached or not. In this work, we also aim to reduce the variance of the MC sampling. 

\subsection{Margins and Classifier Robustness}

The margin, often described as the distance between the decision boundary and the nearest data instance, serves as a central component of classifier robustness. Larger margins are generally associated with better generalization capabilities, a main principle behind algorithms such as support vector machines. In the context of adversarial robustness, margins play a critical role, with several studies highlighting the relationship between margins and resilience against adversarial perturbations.
Efforts to optimize for larger margins, combined with other robustness-enhancing strategies, have shown promise in strengthening classifier defenses. The work of \cite{bethune_pay_2022} explores the connection between margin maximization and Lipschitz continuity, and shows how both notions implement a tradeoff between accuracy and robustness. 

While RS and Lipschitz continuity by design have been studied in their distinct capacities, recent research suggests an inherent synergy between them and the key role of margins. Our work focuses on the connection between RS and Lipschitz continuity to produce greater robustness.
\section{The Lipschitz-Variance-Margin Tradeoff}
In this section, we explain why the control of the Lipschitz constant in the subclassifier is crucial to reduce the MC variance of RS.  By applying \emph{Bernstein's inequality}, we can decrease variance to improve the control of risk $\alpha$, and to further reduce variance, we employ a new simplex mapping on $f$, thus defining the LVM-RS procedure. Furthermore, we establish new bounds on the Lipschitz constant of the \emph{smoothed classifier} w.r.t the Lipschitz constant of the \emph{subclassifier}.
In the following, we defer all proofs in the appendix.
\subsection{Low Lipschitz for Low Variance}
\label{section:low_lip_low_variance}
The concept of Lipschitz continuity plays an important role in the sampling process, which is crucial for obtaining an accurate estimation of the smoothed classifier $\tilde{f}$. Specifically, by minimizing the local Lipschitz constant of a subclassifier $ s \circ f$, one can reduce its variance. The following theorem illustrates this relationship for any $\sigma$.
\begin{thm}[Gaussian Poincaré inequality \citep{boucheron_concentration_2013}]
\label{prop:gaussian_poincarre_inequality}
Let $Z = (Z_1, \dots, Z_n)$ represent a vector of i.i.d Gaussian random variables with variance $\sigma^2$. For any continuously differentiable function $h : \R^n \rightarrow \R$, the variance is given by:
\begin{align*}
    \V[h(Z)] \leq \sigma^2 \ \E\left[\norm{\nabla h(Z)}^2\right] \ .
\end{align*}
\end{thm}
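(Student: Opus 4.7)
The plan is to prove the Gaussian Poincaré inequality by first establishing the one-dimensional case and then tensorizing to $\R^n$. This classical two-step approach avoids the heavier machinery of the Ornstein-Uhlenbeck semigroup while still yielding the sharp constant $\sigma^2$.

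First I would reduce to the standard Gaussian case by rescaling: if $Z = \sigma G$ where $G \sim \mathcal{N}(0, I_n)$, then $\V[h(Z)] = \V[\tilde h(G)]$ where $\tilde h(g) = h(\sigma g)$ and $\nabla \tilde h(g) = \sigma\, \nabla h(\sigma g)$, so proving $\V[\tilde h(G)] \leq \E[\|\nabla \tilde h(G)\|^2]$ for the standard Gaussian gives the result with the correct $\sigma^2$ factor. For the one-dimensional standard Gaussian Poincaré inequality, I would use the Hermite polynomial expansion: the normalized Hermite polynomials $(H_k)_{k \geq 0}$ form an orthonormal basis of $L^2(\gamma)$, where $\gamma$ is the standard Gaussian measure on $\R$. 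Writing $h = \sum_{k \geq 0} a_k H_k$, one has $\E[h(G)] = a_0$ and $\V[h(G)] = \sum_{k \geq 1} a_k^2$. Using the identity $H_k'(x) = \sqrt{k}\, H_{k-1}(x)$, Parseval gives $\E[h'(G)^2] = \sum_{k \geq 1} k\, a_k^2 \geq \sum_{k \geq 1} a_k^2 = \V[h(G)]$, which is the inequality for $n=1$.

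Next I would tensorize via the Efron--Stein-type decomposition. Denote by $\V_i$ the conditional variance with respect to $G_i$ given the other coordinates $G_{-i}$. By the martingale decomposition along the coordinates (or equivalently by the classical tensorization property of the variance for product measures),
\begin{align*}
    \V[\tilde h(G)] \leq \sum_{i=1}^n \E\!\left[ \V_i\!\left[ \tilde h(G) \,\middle|\, G_{-i} \right] \right].
\end{align*}
For each fixed value of $G_{-i}$, the map $t \mapsto \tilde h(G_1, \dots, G_{i-1}, t, G_{i+1}, \dots, G_n)$ is a continuously differentiable function of a single standard Gaussian variable, so the one-dimensional inequality yields $\V_i[\tilde h(G) \mid G_{-i}] \leq \E[(\partial_i \tilde h(G))^2 \mid G_{-i}]$. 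Summing over $i$ and using the tower property gives $\V[\tilde h(G)] \leq \E[\|\nabla \tilde h(G)\|^2]$, and undoing the rescaling yields the theorem.

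The main obstacle is the rigorous justification of the Hermite expansion step, specifically the termwise differentiation needed to identify $\E[h'(G)^2]$ with $\sum k\, a_k^2$. For $h$ merely continuously differentiable this requires a density argument: prove the inequality first for the dense class of polynomials (or Schwartz functions) where the expansion is finite and the identities are immediate, then extend to all $C^1$ functions with finite right-hand side by standard approximation in the weighted Sobolev space $H^1(\gamma)$. If one instead prefers to avoid Hermite machinery altogether, the same one-dimensional step can be carried out via the semigroup argument $\V(h(G)) = \int_0^\infty 2 e^{-2t}\, \E[(P_t h')(G)^2]\, dt \leq \E[h'(G)^2]$ using the commutation $(P_t h)' = e^{-t} P_t h'$ of the Ornstein--Uhlenbeck semigroup, but I would keep the Hermite route as the primary presentation for its elementary flavor.
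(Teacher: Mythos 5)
Your proof is correct, but note that the paper does not actually prove this statement: it is imported verbatim as a classical result with a citation to Boucheron, Lugosi and Massart (2013), so there is no in-paper argument to compare against. Your two-step strategy — reduce to the standard Gaussian by rescaling, prove the one-dimensional case, then tensorize via the sub-additivity of variance $\V[\tilde h(G)] \leq \sum_{i=1}^n \E[\V_i[\tilde h(G) \mid G_{-i}]]$ — is the standard route and is sound; the rescaling correctly produces the $\sigma^2$ factor, the Hermite identity $H_k' = \sqrt{k}\,H_{k-1}$ for the normalized polynomials gives $\E[h'(G)^2] = \sum_{k\geq 1} k\,a_k^2 \geq \V[h(G)]$, and you rightly flag that termwise differentiation for merely $C^1$ functions needs a density argument in $H^1(\gamma)$ (or one simply observes that the inequality is vacuous when the right-hand side is infinite). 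The only substantive difference from the cited source is the one-dimensional step: Boucheron et al.\ obtain it by applying the Efron--Stein inequality to Rademacher sums and passing to the Gaussian limit via the central limit theorem, whereas you use the Hermite spectral decomposition. The CLT route avoids any discussion of orthogonal polynomial expansions and convergence of derivatives, at the cost of a limiting argument; the Hermite route is more structural and makes the sharpness of the constant transparent (equality forces $h$ affine, i.e.\ concentration on the first Hermite mode). Either is acceptable; your version is complete modulo the approximation step you already identify.
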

We use the latter theorem to immediately derive:
\begin{corol}
With same hypothesis as Theorem~\ref{prop:gaussian_poincarre_inequality}, if $h$ exhibits Lipschitz continuity, we have that: 
\begin{align*}
    \V[h(Z)] \leq \sigma^2 \ L(h)^2 \ .
\end{align*}
\end{corol}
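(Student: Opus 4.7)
The plan is to derive the corollary as a direct application of the Gaussian Poincaré inequality combined with the standard fact that Lipschitz functions have a bounded gradient almost everywhere. Concretely, I would start by invoking Theorem~\ref{prop:gaussian_poincarre_inequality} to get
\begin{align*}
\V[h(Z)] \leq \sigma^2 \, \E\!\left[\norm{\nabla h(Z)}^2\right],
\end{align*}
and then upper-bound the integrand pointwise using the Lipschitz hypothesis.

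The key step is to argue that $\norm{\nabla h(z)} \leq L(h)$ at every point where the gradient exists. For a continuously differentiable $h$ (which is the regularity Theorem~\ref{prop:gaussian_poincarre_inequality} states), this is immediate: for any unit vector $u$ and any $t>0$, the Lipschitz bound gives $|h(z+tu) - h(z)| \leq L(h)\,t$, so dividing by $t$ and letting $t\to 0^+$ yields $|\langle \nabla h(z), u\rangle| \leq L(h)$; taking the supremum over unit $u$ gives $\norm{\nabla h(z)} \leq L(h)$. Plugging this pointwise bound into the expectation yields $\E[\norm{\nabla h(Z)}^2] \leq L(h)^2$, and the corollary follows.

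The only mild obstacle is a regularity mismatch: a general Lipschitz function is only differentiable almost everywhere (Rademacher's theorem), not continuously differentiable as the hypothesis of Theorem~\ref{prop:gaussian_poincarre_inequality} requires. I would dispatch this by a standard mollification argument: let $h_\varepsilon = h \ast \rho_\varepsilon$ with a smooth mollifier $\rho_\varepsilon$; then $h_\varepsilon$ is smooth, still $L(h)$-Lipschitz (convolution does not increase the Lipschitz constant), and $h_\varepsilon \to h$ uniformly on compacta with $h_\varepsilon(Z) \to h(Z)$ in $L^2$. Applying the inequality to $h_\varepsilon$ gives $\V[h_\varepsilon(Z)] \leq \sigma^2 L(h)^2$, and passing to the limit $\varepsilon \to 0$ transfers the bound to $h$. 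This completes the proof.
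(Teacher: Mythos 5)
Your proposal is correct and follows essentially the same route the paper intends: the corollary is stated as an immediate consequence of Theorem~\ref{prop:gaussian_poincarre_inequality} by bounding $\norm{\nabla h(Z)}$ pointwise by $L(h)$. Your additional mollification step to reconcile Lipschitz continuity with the continuous-differentiability hypothesis is a sound piece of extra care that the paper glosses over (it simply treats the classifiers as differentiable almost everywhere).
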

Applying the above corollary to the classifiers $\{s_k \circ f\}$ which can be considered differentiable almost everywhere, it is evident that constraining the Lipschitz constant, $L(s_k \circ f)$, leads to a diminished variance for $s_k \circ f$. This, in turn, results in a more precise estimation of $\E[s({f}(x+\delta))]$, as captured by $\ \frac{1}{n} \sum_{i=1}^n s(f(x + \delta_i))$. Lowering the local Lipschitz constraints can significantly attenuate the variance and improve the certification results, but it can be too restrictive and cause a drop in performance.
To enforce low Lipschitz and reduce performance loss, \cite{cohen_certified_2019} proposed the injection of Gaussian noise during training, \cite{salman_provably_2020} introduced $\mathrm{SmoothAdv}$, which involves adversarial training of the smoothed classifier $\tilde{f}$, to reduce its local Lipschitz constant. The work of \cite{pal_understanding_2023} studied how the noisy training on the \emph{sub classifier} affects the performance and robustness of the \emph{smoothed classifier}.
Other noteworthy methods include those by \cite{salman_denoised_2020, carlini_certified_2023}, which combine a conventional classifier with a denoiser diffusion model, ensuring that the resulting architecture remains invariant to Gaussian noise, thereby giving Lipschitz continuity to the classifier and preserved performance.

\subsection{Statistical Risk Management for Low Variance}
\label{section:statistical_risk_management_in_rs}
To leverage low variance, one would need a $\alpha$ confidence interval or an appropriate concentration inequality in which variance plays a significant role.

The \emph{Clopper-Pearson} binomial tailored confidence interval can be used to give an exact $\alpha$ coverage to determine $\mathrm{shift}$ defined in Eq.~(\ref{eq:def_risk_shift}), \cite{cohen_certified_2019, carlini_certified_2023}. It is paired with $\mathrm{hardmax}$ simplex map which generates a series of Bernoulli trials during MC sampling.
\cite{lecuyer_certified_2019} and \cite{levine_certifiably_2019} smoothed scalar outputs between within [0, 1] and cannot use such interval, we use the same procedure as those works. They rely upon \emph{Hoeffding’s inequality}, which also gives exact an $\alpha$ coverage. 
Another interesting inequality, similar to the \emph{Gaussian-Poincaré}, is the sub-Gaussian inequality involving the Lipschtiz constant of $s_k \circ f$, \cite{massart_concentration_2007}.
The issue here is that computing $L(s_k \circ f)$ is NP-hard for common neural networks and the Lipschitz constant as a bound can overestimate the actual empirical variance.
However, those inequalities have some limitations because they do not account for empirical variance. 
This is why we suggest employing the \emph{Empirical Bernstein's inequality} when the variance is low to manage the risk, \( \alpha \), which does factor in the observed empirical variance, it has been mentioned by \cite{lecuyer_certified_2019}.
\begin{prop}[Empirical Bernstein's inequality \citep{maurer_empirical_2009}]
\label{prop:empirical_bernstein_inequality}
Let \( Z_0, Z_1, \dots, Z_n \) be i.i.d random variables with values between 0 and 1. The risk level is denoted as \( \alpha  \in [0, 1] \). 
Then with probability at least  \( 1-\alpha \) in vector \( Z = (Z_1, \dots, Z_n) \), we have
\begin{align*}
    \E Z_0 - \frac{1}{n} \sum_{i=1}^n Z_i \leq \sqrt{\frac{2 S_n(Z) \log (2/\alpha)}{n}} 
+ \frac{7 \log (2 / \alpha)}{3(n-1)} 
    \ := \mathrm{shift}(S_n(Z), \alpha, n) \ .
\end{align*}
Here, \( S_n(Z) \) represents the sample variance \( \frac{1}{n(n-1)} \sum_{1 \leq i < j \leq n} (Z_i - Z_j)^2 \). Note that the bound is symmetric about $\E Z_0$.
\end{prop}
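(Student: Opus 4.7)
The plan is to reduce the empirical Bernstein bound to two standard ingredients combined by a union bound: (i) the classical (non-empirical) Bernstein inequality for bounded i.i.d.\ variables, which controls $\E Z_0 - \frac{1}{n}\sum_{i=1}^n Z_i$ in terms of the \emph{true} variance $\sigma^2 = \V Z_0$, and (ii) a one-sided concentration inequality stating that $\sigma \leq \sqrt{S_n(Z)} + O\!\left(\sqrt{\log(1/\alpha)/(n-1)}\right)$ with high probability. Once both events hold simultaneously (probability $\geq 1-\alpha$, by splitting the risk as $\alpha/2 + \alpha/2$), substituting the second bound into the first replaces the unknown $\sigma$ by the observable proxy $\sqrt{S_n(Z)}$ at the cost of a lower-order residual. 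This is the Maurer--Pontil strategy.

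For ingredient (i), I would apply the standard Bernstein bound: since each $Z_i \in [0,1]$, with probability at least $1 - \alpha/2$,
\begin{equation*}
\E Z_0 - \frac{1}{n}\sum_{i=1}^n Z_i \;\leq\; \sigma \sqrt{\frac{2 \log(2/\alpha)}{n}} + \frac{2\log(2/\alpha)}{3n}.
\end{equation*}
For ingredient (ii), the cleanest route is to observe that $\sqrt{(n-1)\,S_n(z)} = \|\Pi z\|_2$, where $\Pi = I - \tfrac{1}{n}\mathbf{1}\mathbf{1}^\top$ is the orthogonal projector onto the hyperplane $\mathbf{1}^\perp$. Since $\Pi$ is a contraction and each coordinate lies in $[0,1]$, the map $z \mapsto \sqrt{S_n(z)}$ satisfies a bounded-differences condition with step $1/\sqrt{n-1}$. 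Combining McDiarmid's inequality with an Efron--Stein control of the Jensen gap $\sigma - \E\sqrt{S_n} \leq \sqrt{\V\sqrt{S_n}}$ then yields, with probability at least $1 - \alpha/2$,
\begin{equation*}
\sigma \;\leq\; \sqrt{S_n(Z)} + \sqrt{\frac{2\log(2/\alpha)}{n-1}}.
\end{equation*}

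Intersecting the two events and plugging the bound on $\sigma$ into Bernstein's inequality produces the principal term $\sqrt{2\,S_n(Z)\log(2/\alpha)/n}$ together with a residual of the form $\frac{2\log(2/\alpha)}{\sqrt{n(n-1)}} + \frac{2\log(2/\alpha)}{3n}$. A standard manipulation (using $1/n \leq 1/(n-1)$ and $1/\sqrt{n(n-1)} \leq 1/(n-1)$) absorbs this into a single term of the form $c\log(2/\alpha)/(n-1)$. A naive consolidation yields $c = 8/3$; sharpening this to the claimed $c = 7/3$ requires replacing the loose Bernstein tail by Bennett's refined form and exploiting the self-bounding structure of $S_n$, as done in \cite{maurer_empirical_2009}.

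The main obstacle is not conceptual but technical, and lies in two places. First, the one-sided deviation $\sigma - \sqrt{S_n} \leq O(\sqrt{\log(1/\alpha)/(n-1)})$ is not immediate from McDiarmid alone: the easy Jensen direction $\E\sqrt{S_n} \leq \sigma$ goes the wrong way, so one must control the Jensen gap via Efron--Stein or a direct moment inequality before applying a deviation bound. Second, squeezing the residual down to exactly $7/(3(n-1))$ rather than the loose $8/(3(n-1))$ requires the refined Bennett-based bookkeeping of Maurer and Pontil rather than a blunt union bound between Bernstein and McDiarmid.
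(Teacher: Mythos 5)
The paper does not prove this proposition at all: it is imported verbatim (statement, constants, and the definition of $S_n$ included) from Maurer and Pontil (2009), so there is no in-paper argument to compare against. Your sketch is essentially the canonical proof from that reference --- Bernstein/Bennett with the true variance $\sigma$, a high-probability upper bound $\sigma \leq \sqrt{S_n(Z)} + \sqrt{2\log(2/\alpha)/(n-1)}$, a union bound at level $\alpha/2$ each, and the bookkeeping $2 + \tfrac{1}{3} = \tfrac{7}{3}$ after using $1/\sqrt{n(n-1)} \leq 1/(n-1)$ --- and you correctly locate the only way to reach $7/3$ rather than $8/3$ in the Bennett-type coefficient $\log(2/\alpha)/(3n)$. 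The one step I would not let pass as written is your ingredient (ii): crude bounded differences give $\V\bigl(\sqrt{S_n}\bigr) = O(1)$, not $o(1)$, so the Jensen gap $\sigma - \E\sqrt{S_n} = \V\bigl(\sqrt{S_n}\bigr)/\bigl(\sigma + \E\sqrt{S_n}\bigr)$ is not controlled by McDiarmid plus naive Efron--Stein (and degrades as $\sigma \to 0$); Maurer and Pontil avoid the gap entirely by applying a self-bounding/entropy-method lower-tail inequality to $(n-1)S_n$ that compares $\sqrt{S_n}$ directly to $\sigma = \sqrt{\E S_n}$, which is the ingredient you would need to import to make that step rigorous.
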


In our case, we use this inequality with $Z_i = s_k(f(x + \delta_i))$.
This inequality offers the flexibility to smooth various simplex maps \( s \) and potentially select one better equipped than \( \mathrm{hardmax} \) to address the margin-variance tradeoff. See Fig.~\ref{fig:comparison_concentration}, for a comparison between Hoeffding’s and Bernstein’s inequalities.

\subsection{High margin and Low Variance by optimal mapping on r-simplex}
\label{section:r_simplex}
As all the mass is put on one class, $\mathrm{hardmax}$ map gives maximal margins on the simplex.
However, it is not Lipschitz continuous, and it can increase variance, as illustrated in Example \ref{ssec:example_var_argmax}.
Conversely, $\mathrm{softmax}$ map compresses the margins between classes but its $1-$Lipschitz continuity prohibits variance amplifications.
\cite{martins_softmax_2016} introduced a novel simplex mapping, $1$-Lipschitz and producing margins larger than $\mathrm{softmax}$ but lower than $\mathrm{hardmax}$:
$\mathrm{sparsemax}(z) = \argmin_{p \in \Delta^{c-1}} \norm{p - z}_2^2$.
This map promotes sparse values in $\Delta^{c-1}$ compared to the softmax. 

In this work, we introduce the $r$-simplex $\Delta^{c-1}_r = \left\{ p \in \R^c ~|~ \mathbf{1}^\top p=r, p \geq 0 \right\}$ a simplex with total mass $r$ and the generalized $\mathrm{sparsemax}$ which is a $1$-Lipschitz mapping towards $\Delta^{c-1}_r$, following the same proof as in \cite[Appendix~A.5]{laha_controllable_2018}.
This new mapping is described in Algo.~\ref{algo:sparsemax_generalized}.
For $r_1 \leq r_2$, when most of the logit vectors $f(x + \delta_i)$ are bounded by $r_1$, the mapping to the simplex $\Delta^{c-1}_{r_2}$ with $1$-Lipschitz simplex mapping is not going to increase the margin associated to vectors  $s_{r_2}(f(x + \delta_i))$. 
In this case, it is better to map to the simplex $\Delta^{c-1}_{r_1}$ of lower mass and enjoy a tighter Lipschitz constant on $\tilde{f}$ or $\Phi^{-1} \circ \tilde{f_k}$. 
Conversely, when $r_2 \geq r_1$, one can benefit from larger margins on $s_{r_2}(f(x + \delta_i))$ in comparison to margins on $s_{r_1}(f(x + \delta_i))$.

In addition, we add temperature to simplex mappings: we note $s^{t, r}$ the $r$-simplex map from $\R^c$ to $\Delta^{c-1}_r$ for a temperature $t$.
Adjusting the temperature provides a means to interpolate between \( \mathrm{softmax} \) and \( \mathrm{hardmax} \), or between \( \mathrm{sparsemax} \) and \( \mathrm{hardmax} \). 
Tuning the temperature allows us to find an optimized simplex map to answer the variance-margin trade-off, as illustrated in Fig.~\ref{fig:comparison_projection_simplex}. 
\subsection{New optimal Lipschitz bounds for RS}
\label{section:exploring_interplay_btw_lip_and_rs}

We derive enhanced bounds on the Lipschitz constant of the smoothed classifier $\tilde{f}$ with the additional assumption that $s_k^r \circ f$ or $s^r \circ f$ themselves are Lipschitz continuous. Note that one way to have $s^r \circ f$ or $s_k^r \circ f$ Lipschitz continuous is to have $s$ Lipschitz continuous as well. This is not the case of $\mathrm{hardmax}$ simplex map, whereas $\mathrm{sparsemax}$ is ideal as it is $1$-Lipschitz continuous and conserves margin as long as the latter is inferior to simplex mass $r$.
\begin{thm}
\label{thm:h_hat_bound_lip}
Let $f : \mathcal{X} \subset \R^d \mapsto \R^{c}$ a subclassifier and $\tilde{f}(x) = \E_{\delta\sim {\cal N}(0, \sigma^2 I)}[ s^r(f(x + \delta))]$ the associated smoothed classifier. 
Suppose that $f$ is element-wise Lipschitz continuous, then
\begin{align}
    \label{eq:bound_lip_sigma_element_wise}
    L({\tilde{f}_k}) 
    \leq L({ s_k^r \circ f})\operatorname{erf}\left(\frac{r}{2^\frac{3}{2}L({s_k^r \circ f}){\sigma}}\right) \leq \min\left\{ \frac{r}{\sqrt{2\pi\sigma^2}}, L({s_k^r \circ f})  \right\} \ .
\end{align}
Suppose that $f$ is Lipschitz continuous, then
\begin{align}
    \label{eq:bound_lip_sigma}
     L({\tilde{f}})
    \leq L({s^r \circ f})\operatorname{erf}\left(\frac{r}{2L({s^r \circ f}){\sigma}}\right)  \leq \min\left\{ \frac{r}{\sqrt{\pi\sigma^2}}, L({s^r \circ f})  \right\} \ .
\end{align}
\end{thm}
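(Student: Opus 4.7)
The two inequalities have the same skeleton, so I would prove the element-wise bound~(\ref{eq:bound_lip_sigma_element_wise}) and reduce the operator bound~(\ref{eq:bound_lip_sigma}) to it by scalarization. The starting point is two equivalent representations for the directional derivative of a Gaussian-smoothed scalar $\tilde g(x)=\E[g(x+\delta)]$:
\begin{align*}
    \partial_u\tilde g(x)
    \;=\;\E[\partial_u g(x+\delta)]
    \;=\;\tfrac{1}{\sigma^2}\E[(g(x+\delta)-c)(u^\top\delta)],
\end{align*}
valid for any unit $u\in\R^d$ and any constant $c$. The first identity follows by differentiating under the expectation (using that $g$ is a.e.\ differentiable since Lipschitz); the second is Gaussian integration by parts against $\phi_\sigma$. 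Applied to $g=s_k^r\circ f$ with $L:=L(s_k^r\circ f)$, these two forms already yield the two easy members of the $\min$ in~(\ref{eq:bound_lip_sigma_element_wise}): $|\partial_u\tilde f_k|\le L$ via the first form and $|\partial_u g|\le L$; and $|\partial_u\tilde f_k|\le r/\sqrt{2\pi\sigma^2}$ via the second form with $c=r/2$, $|g-r/2|\le r/2$, Cauchy--Schwarz, and $\E|u^\top\delta|=\sigma\sqrt{2/\pi}$.

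For the middle $\operatorname{erf}$ term I would reduce to one dimension by conditioning on $Z:=u^\top\delta/\sigma\sim\mathcal N(0,1)$. Setting $G(z):=\E_{\delta^\perp}[g(x+\sigma z u+\delta^\perp)]$, the function $G:\R\to[0,r]$ inherits an $L\sigma$-Lipschitz constant, and the second representation collapses to $\partial_u\tilde g(x)=\sigma^{-1}\E[(G(Z)-c)Z]=\sigma^{-1}\E[G'(Z)]$ by one more Gaussian integration by parts (Stein's identity for $\mathcal N(0,1)$). The remaining variational problem is to maximize $\E[G'(Z)]$ over all $L\sigma$-Lipschitz $G:\R\to[0,r]$. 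Since the weight $\phi$ is positive and symmetric-decreasing, a bathtub / symmetric-rearrangement argument shows that the extremizer concentrates $G'$ near $0$: the optimal choice is $G'(z)=L\sigma\,\mathds{1}_{|z|\le T}$, with $T$ as large as the range constraint allows, i.e.\ $2TL\sigma=r$, so $T=r/(2L\sigma)$. The corresponding value is $L\sigma\,\mathbb P(|Z|\le T)=L\sigma\operatorname{erf}(T/\sqrt 2)=L\sigma\operatorname{erf}(r/(2^{3/2}L\sigma))$; dividing by $\sigma$ and taking the sup over $u$ gives the middle bound, and the outer $\min$ is recovered from $\operatorname{erf}(t)\le 1$ and $\operatorname{erf}(t)\le 2t/\sqrt\pi$.

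For~(\ref{eq:bound_lip_sigma}) I would reduce to the scalar case by scalarizing the output with any unit $v\in\R^c$. The function $g_v:=v^\top(s^r\circ f)$ is $L(s^r\circ f)$-Lipschitz, and its range is contained in an interval of length at most $r(\max_i v_i-\min_i v_i)\le r\sqrt 2$, since the maximum coordinate gap of a unit vector in $\R^c$ is $\sqrt 2$ (attained by $(1,-1,0,\dots)/\sqrt 2$, and upper bounded from $(\max-\min)^2\le 2(\max^2+\min^2)\le 2\|v\|^2$). Re-running the scalar argument with ``width'' $r\sqrt 2$ in place of $r$ gives $L(s^r\circ f)\operatorname{erf}(r\sqrt 2/(2\sqrt 2\,L(s^r\circ f)\,\sigma)) = L(s^r\circ f)\operatorname{erf}(r/(2L(s^r\circ f)\sigma))$; taking the supremum over unit $v\in\R^c$ and $u\in\R^d$ yields the operator Lipschitz bound on $\tilde f$.

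The main obstacle is the variational step: making ``symmetric-ramp is optimal'' rigorous requires either a clean bathtub / symmetric-decreasing rearrangement lemma applied to $G'$ against $\phi$ (using that $G'\ge 0$ may be assumed WLOG because $\phi>0$, that $|G'|\le L\sigma$, and that the range constraint forces $\int G'\le r$), or an equivalent Lagrangian/KKT computation exploiting the log-concavity of $\phi$. Everything else is Stein's identity, Cauchy--Schwarz, and elementary asymptotics of $\operatorname{erf}$.
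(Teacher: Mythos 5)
Your architecture is sound and lands on the correct extremizer and value: the reduction to one dimension via $G(z)=\E_{\delta^\perp}[g(x+\sigma z u+\delta^\perp)]$, the identity $\partial_u\tilde g(x)=\sigma^{-1}\E[G'(Z)]$, the ramp profile $G'=L\sigma\,\mathds{1}_{|z|\le r/(2L\sigma)}$, and the $r\sqrt2$ scalarization for the vector-valued case (the paper gets the same $r\sqrt 2$ as the diameter of $\Delta^{c-1}_r$) all match the paper, which runs the same argument directly in $\R^d$. The genuine gap is exactly where you flag it, and your proposed repair does not work as stated: you cannot assume $G'\ge 0$ WLOG, because replacing $G'$ by $(G')_+$ increases the total variation of $G$ and can push its oscillation above $r$, so the chain ``$G'\ge0$, hence $\int G'\le r$, hence bathtub'' breaks at its first link. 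Moreover, for a non-monotone $G$ the range constraint only controls $\int_a^b G'$ over intervals, not $\int_A G'$ over the (generally disconnected) set where you would like to place mass, so the bathtub principle is not directly applicable.

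The clean fix --- and the paper's actual mechanism --- is antisymmetrization rather than rearrangement. Integrating by parts, $\int G'\phi=\int G(z)\,z\phi(z)\,dz=\tfrac12\int\bigl(G(z)-G(-z)\bigr)z\phi(z)\,dz$, and the pointwise bound $|G(z)-G(-z)|\le\min\{r,\,2L\sigma|z|\}$ (range for the first term, Lipschitz for the second) immediately gives
\begin{align*}
\int G'\phi\;\le\;\tfrac12\int |z|\,\phi(z)\min\{r,2L\sigma|z|\}\,dz\;=\;L\sigma\operatorname{erf}\!\left(\frac{r}{2^{3/2}L\sigma}\right),
\end{align*}
with equality for your ramp. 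In $\R^d$ this antisymmetrization is the paper's reflection $\delta\mapsto\delta'=\delta^\perp-(v^\top\delta)v$ combined with Stein's lemma, yielding $|h(x+\delta)-h(x+\delta')|\le\min\{r,2L|v^\top\delta|\}$; no variational or KKT machinery is needed. With that substitution your proof closes, and the rest (the two easy members of the $\min$, the $\operatorname{erf}$ asymptotics, and the $r\sqrt2$ bookkeeping giving $\operatorname{erf}(r/(2L\sigma))$ in the vector case) is correct as written.
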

It is noteworthy that Eq.~(\ref{eq:bound_lip_sigma_element_wise}) enhances the bound on \(L(\tilde{f}_k)\) originally derived in Lemma~1 of \cite{salman_provably_2020} for $r=1$ by a factor of 2. 
This refinement on the bound was possible by supposing Lipschitz continuity on the \emph{base classifier} $f$. Note that its Lipschitz constant can be arbitrarily high, so this assumption is quite light: the Lipschitz constant does not play into the derived bound.
These improved bounds can be seamlessly incorporated into subsequent works, such as \cite{pautov2022smoothed, franco_diffusion_2023, chen2024diffusion}.

We observe that randomized smoothing and Lipschitz continuity exhibit a cross-effect on the Lipschitz constant of \(\tilde{f}\). We focus on an intermediate regime defined by a specific \(\sigma\) and \(L(s \circ f)\), where these effects interact in a manner that is mutually beneficial, exceeding the individual impacts of randomized smoothing or Lipschitz continuity alone.

\begin{prop}
\label{prop:sigma_star_lip}
The optimal value $\sigma^*$ that maximizes the gap between the bounds of Eq.~(\ref{eq:bound_lip_sigma_element_wise}) is:
\begin{align}
    \sigma^* = \frac{r}{L(s_k^r \circ f) \sqrt{2 \pi}} 
    \ \ \ \ \text{giving} \ \ \ \
    L(\tilde{f}_k) \leq \operatorname{erf}(\sqrt{\pi}/2) \ L(s_k^r \circ f) \lesssim 0.79 ~ L(s_k^r \circ f)\ .
\end{align}
Similarly, for Eq.~(\ref{eq:bound_lip_sigma}):
\begin{align}
    \sigma^* = \frac{r}{L(s^r \circ f) \sqrt{\pi}}
    \ \ \ \ \text{giving} \ \ \ \
    L(\tilde{f})  \leq \operatorname{erf}(\sqrt{\pi}/2) \ L(s^r \circ f) \lesssim 0.79 ~ L(s^r \circ f)\ .
\end{align}
\end{prop}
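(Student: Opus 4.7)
The plan is to identify the gap between the two successive upper bounds on $L(\tilde{f}_k)$ in Eq.~(\ref{eq:bound_lip_sigma_element_wise}): the tighter erf-based bound $A(\sigma) := L(s_k^r\circ f)\operatorname{erf}(r/(2^{3/2}L(s_k^r\circ f)\sigma))$ and the looser minimum bound $B(\sigma) := \min\{r/\sqrt{2\pi\sigma^2},\,L(s_k^r\circ f)\}$, and to maximize $G(\sigma) := B(\sigma) - A(\sigma)$ over $\sigma>0$. To lighten notation, write $L := L(s_k^r\circ f)$. The natural candidate for $\sigma^*$ is the kink of $B$, located where $r/\sqrt{2\pi\sigma^2} = L$, \emph{i.e.}, $\sigma^* = r/(L\sqrt{2\pi})$.

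I would then verify optimality by checking both sides of the kink separately. For $\sigma \le \sigma^*$, one has $B(\sigma)=L$ and therefore $G(\sigma) = L\bigl(1 - \operatorname{erf}(r/(2^{3/2}L\sigma))\bigr)$, which is strictly increasing in $\sigma$ since the argument of $\operatorname{erf}$ decreases and $\operatorname{erf}$ is increasing. For $\sigma \ge \sigma^*$, apply the substitution $u := r/(2^{3/2}L\sigma)$, so that $u$ is decreasing in $\sigma$ and $r/(\sigma\sqrt{2\pi}) = 2Lu/\sqrt{\pi}$. Then $G = L\,g(u)$ with $g(u) := 2u/\sqrt{\pi} - \operatorname{erf}(u)$, and
\begin{equation*}
g'(u) = \tfrac{2}{\sqrt{\pi}} - \tfrac{2}{\sqrt{\pi}}\,e^{-u^2} = \tfrac{2}{\sqrt{\pi}}\bigl(1 - e^{-u^2}\bigr) \ge 0.
\end{equation*}
So $g$ is nondecreasing in $u$, hence $G$ is nonincreasing in $\sigma$ on $[\sigma^*,\infty)$. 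Combining the two regimes shows $\sigma^*$ is indeed the global maximizer of the gap.

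To get the quoted value of the bound, evaluate $A$ at $\sigma^*$: the argument of $\operatorname{erf}$ becomes $r/(2^{3/2}L)\cdot L\sqrt{2\pi}/r = \sqrt{\pi}/2$, yielding $L(\tilde{f}_k) \le \operatorname{erf}(\sqrt{\pi}/2)\,L(s_k^r\circ f) \lesssim 0.79\,L(s_k^r\circ f)$. The second claim, concerning Eq.~(\ref{eq:bound_lip_sigma}), follows by the same argument with $L := L(s^r \circ f)$ and the constant $2^{3/2}$ replaced by $2$: the kink of $B$ now sits at $\sigma^* = r/(L\sqrt{\pi})$, at which point the erf argument again evaluates to $\sqrt{\pi}/2$.

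No step is particularly delicate; the only mild subtlety is noticing that although both pieces of the min bound $B$ are smooth individually, monotonicity of $G$ must be checked on each side of the kink separately, and the substitution $u = r/(2^{3/2}L\sigma)$ is what makes the right-hand piece transparent via the identity $g'(u) = \tfrac{2}{\sqrt{\pi}}(1-e^{-u^2})$.
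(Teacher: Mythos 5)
Your proof is correct and follows essentially the same route as the paper: write the gap as a function of $\sigma$, locate the kink of the $\min$ bound at $\sigma^* = r/(L\sqrt{2\pi})$ (resp.\ $r/(L\sqrt{\pi})$), and check monotonicity on either side before evaluating the $\operatorname{erf}$ argument to $\sqrt{\pi}/2$. If anything, your substitution $u = r/(2^{3/2}L\sigma)$ gives a cleaner verification that the gap is nonincreasing past the kink --- a monotonicity the paper merely asserts --- and avoids the paper's confusing claim that the supremum on the left region is attained as $\sigma \to 0$.
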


For this choice of \(\sigma^*\), \(L(s_k^r \circ f)\) equals the RS bound (and is exactly the deterministic Lipschitz constant). Consequently, the combined use of Lipschitz continuity and randomized smoothing reduces the Lipschitz constant bound of \(\tilde{f}\) by at most \(21\%\). 
In our framework, given either a Lipschitz constant (or \(\sigma^2\)), one can select the complementary \(\sigma^2\)  (or Lipschitz constant) to maximize the synergistic effects of randomized smoothing and inherent Lipschitz continuity. For this optimal choice, we obtain a certificate Eq.~(\ref{eq:radius_tsuzuku}) that is approximately \(26\%\) larger than the maximum certification given by RS or Lipschitz continuity alone.

All previous bounds are derived on the Lipschitz constant of $\tilde{f}$ which can be used for radius $R_1$. With regards to $R_2$, we have the following result on the local Lipschitz constant of $\Phi^{-1} \circ {\tilde{f}}_k$ :
\begin{thm}
\label{thm:phi_bound_lip}
Let $\tilde{f} : \R^d \mapsto \Delta^{c-1}_r$ be the smoothed classifier, for an input $x \in \mathcal{X}$ and $\mathcal{B} = B_2\left(x, ~\epsilon \right)$, the local Lipschitz constant of $\Phi^{-1} \circ {\tilde{f}_k}$ is bounded by:
\begin{align*}
    L\left(\Phi^{-1} \circ {\tilde{f}}_k, ~ \mathcal{B} \right)
    \leq  \frac{r}{\sigma} 
     \max_{p \in B_2\left(\tilde{f}(x), ~\epsilon L(\tilde{f}) \right) } \left\{
     \exp{\left( -\frac{1}{2}\left(\Phi^{-1}(p/r)^2 - \Phi^{-1}(p)^2\right)\right)} 
     \right\}
     \ .
\end{align*}
\end{thm}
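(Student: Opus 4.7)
The plan is to combine the chain rule for $\Phi^{-1}\circ\tilde f_k$ with a Salman-style smoothness bound adapted to the $r$-simplex, then convert the pointwise gradient estimate into a local-Lipschitz constant over $\mathcal{B}$ using the Lipschitzness of $\tilde f$ proved in Theorem~\ref{thm:h_hat_bound_lip}.

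First, I would rewrite the local Lipschitz constant as a supremum of gradient norms (Gaussian convolution makes $\tilde f_k$ smooth) and apply the one-dimensional chain rule
$$\nabla\bigl(\Phi^{-1}\circ \tilde f_k\bigr)(x') = (\Phi^{-1})'\!\bigl(\tilde f_k(x')\bigr)\,\nabla \tilde f_k(x'),\qquad (\Phi^{-1})'(p) \;=\; \frac{1}{\varphi(\Phi^{-1}(p))} \;=\; \sqrt{2\pi}\,e^{\frac12\Phi^{-1}(p)^2}.$$
Second, I would establish the needed gradient bound on $\tilde f_k$. Since the coordinate $s_k^r\circ f$ takes values in $[0,r]$, the rescaled map $\tilde f_k/r$ is the Gaussian smoothing of the $[0,1]$-valued function $(s_k^r\circ f)/r$, so the standard Salman–Li lemma (the $r=1$ case of the statement itself, proved via Stein's identity and Gaussian isoperimetry) gives that $\Phi^{-1}(\tilde f_k/r)$ is $1/\sigma$-Lipschitz. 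Passing this Lipschitz bound through the chain rule yields the pointwise estimate
$$\|\nabla \tilde f_k(x')\|_2 \;\leq\; \frac{r}{\sigma}\,\varphi\!\bigl(\Phi^{-1}(\tilde f_k(x')/r)\bigr) \;=\; \frac{r}{\sigma\sqrt{2\pi}}\,\exp\!\Bigl(-\tfrac12\Phi^{-1}(\tilde f_k(x')/r)^2\Bigr).$$

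Third, I would multiply the two factors to obtain
$$\|\nabla(\Phi^{-1}\circ \tilde f_k)(x')\|_2 \;\leq\; \frac{r}{\sigma}\,\exp\!\Bigl(-\tfrac12\bigl[\Phi^{-1}(\tilde f_k(x')/r)^2 - \Phi^{-1}(\tilde f_k(x'))^2\bigr]\Bigr),$$
and take the supremum over $x'\in\mathcal{B}$. By Theorem~\ref{thm:h_hat_bound_lip}, $\tilde f$ is $L(\tilde f)$-Lipschitz, hence $\tilde f(x')\in B_2(\tilde f(x),\,\epsilon L(\tilde f))$ whenever $x'\in\mathcal{B}=B_2(x,\epsilon)$. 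Replacing the supremum over $x'$ by the maximum of the exponential over $p$ in this image ball yields precisely the stated bound, and recovers the familiar $1/\sigma$ Lipschitz constant in the special case $r=1$.

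The main obstacle is checking that the Salman–Li Lipschitz estimate transfers cleanly from the standard $[0,1]$-valued smoothing to the $[0,r]$-valued one — the argument goes through by rescaling, but one must verify that the sharp constant is preserved and that $\Phi^{-1}$ of the rescaled argument is the correct object. Beyond that step, the proof reduces to chain-rule bookkeeping and a one-line Lipschitz propagation from $\mathcal{B}$ to $\tilde f(\mathcal{B})$; no additional smoothness or boundary analysis is required because Gaussian convolution guarantees everything in sight is $C^\infty$.
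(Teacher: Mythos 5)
Your proposal is correct and follows essentially the same route as the paper: both arguments reduce the $[0,r]$-valued smoothing to the standard $[0,1]$-valued Salman--Li bound by rescaling by $r$ (the paper does this by explicitly solving the Stein-lemma optimization with extremal function $g^*(z)=r\,\mathds{1}_{v^\top z>\Phi^{-1}(p/r)}$, you do it by quoting the $1/\sigma$-Lipschitzness of $\Phi^{-1}\circ(\tilde f_k/r)$ and inverting the chain rule), then multiply by $(\Phi^{-1})'(\tilde f_k(x'))$ to obtain the same ratio of Gaussian densities, and finally localize via $\tilde f(\mathcal{B})\subset B_2(\tilde f(x),\epsilon L(\tilde f))$. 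No gap.
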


This result gives a local Lipschitz constant on a $\epsilon$ ball around $x$, in the same flavor as in \citep{muthukumar_adversarial_2023}.

\subsection{LVM-RS Inference Procedure}
\label{section:lvm_rs}
Given a trained base subclassifier $f$, we choose a simplex mapping $s$ from a set of simplex map $\mathcal{S}$ and a temperature $t \in [t_{\mathrm{lower}}, t_{\mathrm{upper}}]$,
defining an ensemble of classifiers $\{ \tilde{F}_{s^{t}}\}$:
\begin{equation*}
    \tilde{F}_{s^t}(x) 
    = \argmax_k \E_{\delta\sim {\cal N}(0, \sigma^2 I)} \left[s^t_k(f(x + \delta)) \right]
    = \argmax_k \tilde{f}_{s^t}(x) \ .
\end{equation*}
First we generate a test sample of size $n_0$, we obtain estimates $\hat{p}_{s^t}$, using \emph{Bernstein's inequality} we obtain the risk corrected $\Bar{p}_{s^t}$ and finally the risk corrected certified radius $R_2(\Bar{p}_{s^t})$. We choose $(s_*, t_*) = \argmax_{s, t} R_2(\Bar{p}_{s^t})$ to maximize the certified radius.
Then a sampling of size $n$ is performed and we evaluate an MC estimate of $\tilde{f}_{s_*^{t_*}}(x)$
which gives $\hat{p}^*$ and associated risk corrected $\Bar{p}^*$.
We return prediction $\argmax_k \Bar{p}^*_k$ and associated certified radius  $R_2(\Bar{p}^*)$.
Our approach summed up in Algo.~\ref{algo:lvmrs}, addresses the trade-off between maximizing margins and reducing variances. 
The procedure $\mathrm{SampleScores}$ generates scores $f(x + \delta_i)$ for $\delta_i$ samples from $\mathcal{N}(0, \sigma^2 I)$. This stands in contrast to methods like \(\mathrm{hardmax} \), which maximize margin at the cost of increased variance, and others like \( \mathrm{sparsemax} \) and \( \mathrm{softmax} \), which prioritize reduced variance over margin maximization.

\vspace{-0.25cm}
\section{Experiments}
In addition to the two experiments below, we conduct an ablation study in Appendix~\ref{appendix:ablation_study}.

\subsection{Certified accuracy with improved RS Lipschitz bound}
\label{ssec:expe_rs_new_bound}
\vspace{-0.6cm}
\begin{table}[h]
    \centering
    \caption{Certified accuracy on CIFAR-10 for different levels of perturbation $\epsilon$, for RS, Lipschitz deterministic, and ours.
    The risk is taken as $\alpha=1\mathrm{e-}3$ and the number of samples $n=10^4$.
    }
    \label{table:comparison_certified_acc_lip_vs_rs_vs_lip_and_rs_cifar10}
    \begin{tabular}{lccccccc}
        \toprule
        \multirow{2}[2]{*}{\textbf{Methods}} & \multicolumn{6}{c}{{\boldmath{}\textbf{Certified accuracy ($\varepsilon$)}\unboldmath{}}} & \multirow{2}[2]{*}{\textbf{Average time (s)}} \\
        \cmidrule{2-7}
         & $0.14$ & $0.19$ & $0.25$ & $0.28$ & $0.42$ & $0.5$ & \\
        \midrule
        \text{Lipschitz deterministic}
        & 40 & 33.57 & 27.18 & 24.59 & 13.65 & 9.15 & \textbf{0.004}\\
        \text{Randomized smoothing} 
        & 47.9 & 31.99 & 28.17 & 27.86 & 6.42 & 0.0 & 0.9\\
        \midrule    
        \text{\textbf{RS with new bound}}
        & \textbf{52.56} & \textbf{46.17} & \textbf{39.09} & \textbf{35.08} & \textbf{21.9} & \textbf{13.53}&  0.9  \\
        \bottomrule
    \end{tabular}
\end{table}
To illustrate the gain of having a Lipschitz bound of \emph{smoothed classifier} which includes information on the Lipschitz constant of \emph{sub classifier}, simplex mass $r$ and variance $\sigma^2$, we compare certified accuracies on the same by design $5$-Lipschitz backbone $\mathrm{Sandwhich ~ Small}$ from \citep{wang_direct_2023}, trained with noise injection $\sigma=0.4$ and using the same certified robust radius $R_1$ in Eq.~(\ref{eq:radius_tsuzuku}). We choose for the smoothing variance $\sigma^* = \frac{r}{L(f)\sqrt{\pi}}$ as explained in Section~\ref{section:exploring_interplay_btw_lip_and_rs}. Remark that the variance used for noise injection to train the $5$-Lipschitz \emph{sub classifier} is close to $\sigma^*$ to mitigate a drop in performance on the \emph{smoothed classifier}.
\textbf{Impact of Lipschitz constant }
We consider the three procedures: Lipschitz deterministic (using bound on $L(f)$), the RS (using bound on $L(\tilde{f})$),
and our approach (using bound on $L(\tilde{f}))$. For ours and RS, we fix $r=3$.
Results are displayed in Table~\ref{table:comparison_certified_acc_lip_vs_rs_vs_lip_and_rs_cifar10}. We see that our procedure gives better-certified accuracies than RS and Lipschitz deterministic taken alone, indeed both methodologies provide the same Lipschitz constant for $\tilde{f}$ and $f$ respectively, whereas our method provides an inferior Lipschitz bound on $L(\tilde{f})$.
Note that better results from the random procedure should not be directly construed as an intrinsic superiority over the deterministic one, as the element of randomness introduces variability that must be accounted for in the evaluation and large sampling computational cost. However, it gives a perspective over the performance of the theoretical Lipschitz \emph{smoothed classifier} $\tilde{f}$.
\textbf{Impact of simplex mass}
We note that to reduce the Lipschitz constant, one can not only increase the smoothing noise $\sigma$ as done traditionally in RS but also reduce $r$ the total mass of the simplex. We plot the evolutions of certified accuracies for different simplex masses in Fig.~\ref{fig:ca_for_different_r} with the same setting as above. We see that classical RS setting \ie $r=1.0$ is one particular choice of robustness profile among many.

\subsection{Certified accuracy with LVM-RS}
\label{ssec:expe_lvm_rs}
In this experiment, we empirically validate the efficacy of our proposed inference procedure presented in Algo.~\ref{algo:lvmrs}, highlighting its capability to improve randomized smoothing and achieve certified accuracy. Central to our approach is the leveraging of the variance-margin tradeoff, which as we demonstrate, yields state-of-the-art RS results. We further showcase how the procedure enhances the off-the-shelf state-of-the-art baseline model of \cite{carlini_certified_2023}, which utilizes a vision transformer coupled with a denoiser for randomized smoothing. We use $50$ temperatures ranging from $t_{\mathrm{lower}}=0.01, t_{\mathrm{upper}}=50$, and simplex maps $\mathcal{S} = \{\mathrm{sparsemax}, \mathrm{softmax}, \mathrm{hardmax}\}$.
The baseline consists of the state-of-the-art top performative model of \cite{carlini_certified_2023} which does smoothing of $\mathrm{hardmax}$ of \emph{base classifier} and uses the \emph{Pearson-Clopper} confidence interval to control the risk $\alpha$.

To compare the baseline with our method, certified accuracies are computed with $R_2$ in the function of the level of perturbations $\epsilon$, for different noise levels $\sigma=\{0.25, 0.5, 1\}$.
Results are presented in Figure~\ref{fig:certified_accuracy_for_different_eps_cifar10} for CIFAR-10 and in Figure~\ref{fig:certified_accuracy_for_different_eps_imagenet} for ImageNet. We see that our method increases results, especially in the case of high $\sigma$, in the case of $\sigma \in \{0.5, 1.0 \}$ the overall certified accuracy curve in the function of $\epsilon$ the maximum perturbation is lifted towards higher accuracies. Results are presented in Table~\ref{tab:best_results_cifar10} for CIFAR-10 and in Table~\ref{tab:best_results_imagenet} for ImageNet. 
Computation was performed on GPU V100, reported average time is the computational cost of one input $x$ proceeds by RS and LVM-RS, we see that the computation gap between the two methods is narrow for CIFAR10 but is a bit wider for ImageNet.
Detailed results are presented in Appendix~\ref{appendix:expe_lvm_rs}.
\begin{table}[t]
\centering
\caption{Best certified accuracies across \( \sigma \in \{0.25, 0.5, 1.0\} \) for different levels of perturbation $\epsilon$, on CIFAR-10, for \( n=10^5 \) samples and risk \( \alpha=1\mathrm{e-}3 \).}

\begin{tabular}{lcccccc}
\toprule
\multirow{2}[2]{*}{\textbf{Methods}} & \multicolumn{5}{c}{{\boldmath{}\textbf{Best certified accuracy (\( \varepsilon \))}\unboldmath{}}} & \multirow{2}[2]{*}{\textbf{Average time (s)}} \\
\cmidrule{2-6}
 & 0.0 & 0.25 & 0.5 & 0.75 & 1.0 & \\
\midrule
{\cite{carlini_certified_2023}} & 86.72 & 74.41 & 58.25 & 40.96 & 29.91 & 7.10 \\
\textbf{LVM-RS (ours)} & \textbf{88.49} & \textbf{76.21} & \textbf{60.22} & \textbf{43.76} & \textbf{32.35} & 7.11 \\
\bottomrule
\end{tabular}
\label{tab:best_results_cifar10}
\vspace{-0.2cm}
\end{table}

\begin{table}[t]
\centering
\caption{Best certified accuracies across $\sigma \in \{0.25, 0.5, 1.0\}$ for different levels of perturbation $\epsilon$, on ImageNet, for $n=10^4$ samples and risk $\alpha=1\mathrm{e-}3$.}
\begin{tabular}{lccccccc}
\toprule
\multirow{2}[2]{*}{\textbf{Methods}} & \multicolumn{6}{c}{{\boldmath{}\textbf{Best certified accuracy ($\varepsilon$)}\unboldmath{}}} & \multirow{2}[2]{*}{\textbf{Average time (s)}} \\
\cmidrule{2-7}
 & 0.0 & 0.5 & 1.0 & 1.5 & 2 & 3 & \\
\midrule
{\cite{carlini_certified_2023}} & 79.88 & 69.57 & 51.55 & \textbf{36.04} & 25.53 & 14.01 & 6.46 \\
\textbf{LVM-RS (ours)} & \textbf{80.66} & \textbf{69.84} & \textbf{53.85} & \textbf{36.04} & \textbf{27.43} & \textbf{14.31} & 7.03 \\
\bottomrule
\end{tabular}
\label{tab:best_results_imagenet}
\vspace{-0.2cm}
\end{table}

\section{Conclusion}
In this paper, we demonstrate a significant connection between the variance of randomized smoothing and two critical properties of the \emph{subclassifier}: its Lipschitz constant and its margin. We highlight the influence of the Lipschitz constant on both the \emph{smoothed classifier} and the empirical variance. To improve the certified robust radius, new simplex of mass $r$ and simplex map are introduced for the \emph{subclassifier}, which optimally manages the Lipschitz-variance-margin trade-off. 
Along with this, we incorporate an advanced Lipschitz bound for the RS, resulting in improved certified accuracy compared to the prevailing methods. In addition, our new certification procedure facilitates the use of pre-trained models in conjunction with randomized smoothing, leading to a direct improvement in the current certification radius.
In future research, we plan to integrate LVM-RS with margin maximization strategies and explore the choice of the simplex mass $r$.

\clearpage
\newpage

\bibliography{refv2}
\bibliographystyle{iclr2024_conference}

\clearpage
\newpage

\appendix
\section{Acknowledgments}
This work was performed using HPC resources from GENCI- IDRIS (Grant 2023-AD011014214) and funded by the French National Research Agency (ANR SPEED-20-CE23-0025). This work received funding from the French Government via the program France 2030 ANR-23-PEIA-0008, SHARP. 
We thank Yann Chevaleyre, Muni Sreenivas Pydi, Florian Le Bronnec, and Sylvain Delattre for their precious help on the Proof~\ref{proof:h_hat_bound_lip}.

\vspace{1cm}

\section{Relation between Certified Radii}
\label{section:relation_btw_radii}

Using the fact that $p_2 \leq 1 - p_1$ into Eq.~(\ref{eq:certified_radius_randomized_smoothing}),
\citet[Section 3.2.2]{cohen_certified_2019} motivate the use of certificate $R_3$ for statistical simplicity and saying that $f(x + \delta)$ puts most of its weight on the top class:
\begin{equation}
R_3(p) = \sigma \Phi^{-1}(p_1) \leq R_2(p) \ .
\end{equation}

For the \cite{carlini_certified_2023} architecture, it is not an optimal choice: in the context of high variance $\sigma^2$, the distribution of $\tilde{f}(x)$ tends to be closer to a uniform distribution. Thus, the difference between radii $R_2$ and $R_3$ increases, as shown in Table~\ref{table:comparison_R2_vs_r2_for_different_sigmas}. This effect has been noted in \citep{voracek_improving_2023}.

\begin{table}[h!]
    \centering
    \caption{
    Comparison of two certified radii $R_2$ and $R_3$, and Total Variation distance to Uniform distribution (TVU), for different values of \(\sigma\).
    We took a subset of images from the ImageNet test set with $n = 10^4$, $\alpha=0.001$, and used Empirical Bernstein's inequality.
    If the effect is not visible for $\sigma < 0.25$,
    we see that as the TVU decreases, the difference between the two radii increases as well.
    }
    \vspace{0.1cm}
    \resizebox{0.7\textwidth}{!}{
    \begin{tabular}{cccccc}
    \toprule
    \(\sigma\) & TVU  & $R_2(\bar{p})$ & $R_3(\bar{p})$ & $R_2(\bar{p}) - R_3(\bar{p})$ \\
    \midrule
    0.25 & 0.998 & \textbf{5.89} & \textbf{5.89} & 0.00 \\
    0.30 & 0.996 & \textbf{4.68} & 4.48 & 0.20 \\
    0.35 & 0.989 & \textbf{3.68} & 3.21 & 0.47 \\
    0.40 & 0.986 & \textbf{2.49} & 1.97 & 0.52 \\
    0.50 & 0.976 & \textbf{1.28} & 0.59 & 0.69 \\
    0.60 & 0.95 & \textbf{0.56} & 0.00 & 0.56 \\
    \bottomrule
    \end{tabular}
    }
    \label{table:comparison_R2_vs_r2_for_different_sigmas}
\end{table}

\newpage


\section{Simplex mapping}

\subsection{Example on $\mathrm{hardmax}$}
\label{ssec:example_var_argmax}

\begin{example*}
For a random variable \( X \) defined over the interval \([0, 1]\), with \(\E[X] = \frac{1}{2} \) and a "small" variance \( \Var[X] = \sigma^2  < \sigma_{\mathrm{max}}^2 = \frac{1}{4} \), we define a new random variable as \( Y = s(X) = \mathds{1}_{X > \frac{1}{2}} \). Then \( \Var[Y] = \sigma_{\mathrm{max}}^2 \) will be much higher than \( \Var[X] \) when \( \sigma^2 \) is significantly smaller than $\frac{1}{4}$.
\end{example*}

\begin{proof}
    Let us compute \( \E[Y] \), since \( Y \) is an indicator random variable, we have:
    $\E[Y] = \mathbb{P}(Y=1) = \mathbb{P}(X > \frac{1}{2})$.
    Given \( X \) is symmetric around 0.5, we find \( \E[Y] = 0.5 \). The variance of \( Y \) is given by:
    $\V [Y] = \E[Y^2] - (\E[Y])^2$. Since \( Y \) is an indicator variable, \( Y^2 = Y \), and therefore \( \E[Y^2] = \E[Y] = 0.5 \). Thus, we have: $\V[Y] = 0.5 - 0.25 = 0.25$. To claim that \( \V[Y] \) has a much higher variance than \( \V[X] \), we need to compare 0.25 to \( \sigma^2 \). The statement would be true if \( \sigma^2 \) is substantially smaller than 0.25. Given that \( X \) is defined over \([0, 1]\) and its mean is 0.5, the variance of \( X \) could range between 0 and 0.25. Therefore, unless \( X \) has a variance near this upper bound, \( \V[Y] \) will indeed be much larger.
\end{proof}

\subsection{Algorithm of generalized sparsemax}

\begin{algorithm}
\caption{generalized\_sparsemax($z$, $r$)}
\begin{algorithmic}[1]
\State Sort \( z \) in decreasing order \( z_{1} \geq \dots \geq z_{c} \)
\State Find \( \kappa(z) \) such that
\[
\kappa(z) = \max_{k = 1 \ldots c} \left\{ k \, \bigg| \, r + k z_{k} > \sum_{j \leq k} z_{j} \right\}
\]
\State Define 
\[
\rho(z) = \frac{\left(\sum_{j \leq \kappa(z)} z_j\right) - r}{\kappa(z)}
\]
\State \textbf{return} \( p \) such that \( p_i = \max(z_i - \rho(z), 0) \)
\end{algorithmic}
\label{algo:sparsemax_generalized}
\end{algorithm}

\newpage


\section{Proofs for Lipschitz bounds for RS}
\label{section:proofs}


\subsection{Proof of Theorem~\ref{thm:h_hat_bound_lip}}
\label{proof:h_hat_bound_lip}

For both parts of the proof, we are going to use the following lemmas.

\begin{lemma} \label{lemma:stein}
(Stein's lemma \cite[Lemma 2]{stein_annals_1981}) \\
Let $\sigma > 0$, let $h : \mathbb{R}^d \mapsto \mathbb{R}$ be measurable, and let 
$\tilde{h}(x) = \mathbb{E}_{\delta\sim {\cal N}(0, \sigma^2 I)}[h(x + \delta)]$. Then $\tilde{h}$ is differentiable, and moreover,
$$ 
\nabla \tilde{h}(x) = \frac{1}{\sigma^2} \mathbb{E}_{\delta\sim {\cal N}(0, \sigma^2 I)}[ \delta h(x + \delta)] \ .
$$  
\end{lemma}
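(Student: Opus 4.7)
The plan is to prove Stein's lemma by a standard differentiation-under-the-integral argument applied to the convolution representation of $\tilde{h}$. First I would write
\[
\tilde{h}(x) = \int_{\mathbb{R}^d} h(x+\delta) \, \phi_\sigma(\delta) \, d\delta = \int_{\mathbb{R}^d} h(y) \, \phi_\sigma(y-x) \, dy,
\]
where $\phi_\sigma(u) = (2\pi\sigma^2)^{-d/2} \exp(-\lVert u\rVert^2/(2\sigma^2))$ is the density of $\mathcal{N}(0,\sigma^2 I)$. The change of variables $y = x+\delta$ moves all the $x$-dependence into the smooth Gaussian kernel, so the regularity of $\tilde{h}$ is inherited from $\phi_\sigma$ irrespective of how rough $h$ is. This is the key structural observation that makes the lemma work under only a measurability (plus implicit integrability) hypothesis on $h$.

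Next I would compute the gradient of the kernel, using
\[
\nabla_x \phi_\sigma(y-x) = \frac{y-x}{\sigma^2} \, \phi_\sigma(y-x),
\]
and then exchange differentiation and integration to obtain
\[
\nabla \tilde{h}(x) = \int_{\mathbb{R}^d} h(y) \, \frac{y-x}{\sigma^2} \, \phi_\sigma(y-x) \, dy = \frac{1}{\sigma^2} \, \mathbb{E}_{\delta\sim \mathcal{N}(0,\sigma^2 I)}\bigl[\delta \, h(x+\delta)\bigr],
\]
after reverting the change of variables. This is precisely the formula claimed.

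The only substantive step is justifying the interchange of gradient and integral. I would do this via the standard differentiation-under-the-integral theorem (dominated convergence): on any bounded ball around a fixed $x_0$, the kernel $\phi_\sigma(y-x)$ and its $x$-gradient $(y-x)\phi_\sigma(y-x)/\sigma^2$ can be dominated uniformly in $x$ by a constant multiple of a slightly inflated Gaussian in $y$ (absorb the linear factor $\lVert y-x\rVert$ into a Gaussian of variance $\sigma'^2 > \sigma^2$ using $t e^{-t^2/2} \leq C \, e^{-t^2/(2\kappa^2)}$ for $\kappa>1$). Multiplying by $|h(y)|$ then yields an integrable dominating function as soon as $h$ is integrable against a Gaussian, which is exactly what is required for $\tilde{h}$ to be well-defined in a neighbourhood of $x_0$. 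The remarkable feature, which I would highlight, is that no differentiability or even continuity of $h$ is used: the Gaussian kernel does all the smoothing, so the main obstacle is purely a bookkeeping check of the dominating function rather than any analytic difficulty on $h$ itself.
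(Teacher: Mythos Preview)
Your proposal is correct and matches the paper's own treatment: the paper does not give a self-contained proof of this lemma (it simply cites Stein, 1981), and in the proof of the vector-valued extension (Lemma~\ref{lemma:stein_extended}) it uses exactly your strategy---write $\tilde{h}$ as a convolution, push the $x$-dependence into the Gaussian kernel, differentiate the kernel to get the factor $(t-x)/\sigma^2$, and change variables back. Your outline of the dominated-convergence justification is in fact more detailed than what the paper provides, since the paper explicitly defers the differentiability argument to the original Stein reference.
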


Stein's lemma can be easily extended to $h : \mathbb{R}^d \mapsto \mathbb{R}^c$. 
We note $\frac{\partial}{\partial x} \tilde{h}(x)$ the Jacobian matrix of $\tilde{h}(x)$.
\begin{lemma} 
\label{lemma:stein_extended}
Let $\sigma > 0$, let $h : \mathbb{R}^d \mapsto \mathbb{R}^c$ be measurable, and let 
$\tilde{h}(x) = \mathbb{E}_{\delta\sim {\cal N}(0, \sigma^2 I)}[h(x + \delta)]$. Then $\tilde{h}$ is differentiable, and moreover,
$$ 
\frac{\partial \tilde{h}(x)}{\partial x}  = \frac{1}{\sigma^2} \mathbb{E}_{\delta\sim {\cal N}(0, \sigma^2 I)}[ \delta h(x + \delta)^\top] \ .
$$  
\end{lemma}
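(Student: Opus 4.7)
The plan is to reduce the vector-valued claim to the scalar Stein lemma (Lemma~\ref{lemma:stein}) by working coordinate by coordinate. Write $h = (h_1, \dots, h_c)^\top$ with each $h_k : \mathbb{R}^d \to \mathbb{R}$ measurable (measurability of $h$ implies measurability of each component). By linearity of expectation, the $k$-th component of $\tilde{h}$ is exactly $\tilde{h}_k(x) = \mathbb{E}_{\delta \sim \mathcal{N}(0, \sigma^2 I)}[h_k(x+\delta)]$, which is of the scalar form required by Lemma~\ref{lemma:stein}.

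Next I would apply Lemma~\ref{lemma:stein} to each $h_k$ in turn. This directly yields that every $\tilde{h}_k$ is differentiable on $\mathbb{R}^d$, with
\[
\nabla \tilde{h}_k(x) \;=\; \frac{1}{\sigma^2} \, \mathbb{E}_{\delta \sim \mathcal{N}(0, \sigma^2 I)}\!\bigl[\delta \, h_k(x+\delta)\bigr] \; \in \; \mathbb{R}^d .
\]
Because each coordinate function $\tilde{h}_k$ is differentiable at $x$, the vector-valued map $\tilde{h}$ is itself differentiable at $x$, and its Jacobian is obtained by assembling the gradients of the components.

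Finally, I would identify the assembled Jacobian with the stated matrix-valued expectation. With the convention used in the statement (so that column $k$ of $\partial \tilde{h}(x)/\partial x$ is $\nabla \tilde{h}_k(x)$), the column $k$ of $\delta \, h(x+\delta)^\top$ is precisely $\delta \, h_k(x+\delta)$. Applying linearity of expectation entry-by-entry (equivalently, column-by-column) gives
\[
\frac{\partial \tilde{h}(x)}{\partial x} \;=\; \frac{1}{\sigma^2} \, \mathbb{E}_{\delta \sim \mathcal{N}(0, \sigma^2 I)}\!\bigl[\delta \, h(x+\delta)^\top\bigr],
\]
which is exactly the claim.

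There is no genuine obstacle here: the analytic content (differentiation under the Gaussian integral and the $\delta/\sigma^2$ factor coming from $\nabla_x \varphi_\sigma(\delta) = -\delta/\sigma^2 \cdot \varphi_\sigma(\delta)$) is entirely contained in Lemma~\ref{lemma:stein}. The only step to be careful about is the bookkeeping: making sure the convention for the Jacobian (row-gradient vs.\ column-gradient) is consistent with the outer-product $\delta h(x+\delta)^\top$ in the statement, so that stacking the $c$ scalar Stein identities gives the correct $d \times c$ (or $c \times d$) matrix identity. Once that convention is fixed, the rest is just linearity of expectation applied to each matrix entry.
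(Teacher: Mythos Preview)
Your reduction to the scalar Stein lemma, applied componentwise and then reassembled into the Jacobian, is correct and complete; the only delicate point you flag (the row/column convention for the Jacobian versus the outer product $\delta h(x+\delta)^\top$) is exactly the right thing to check. The paper takes a slightly different but essentially equivalent route: instead of invoking the scalar lemma as a black box, it writes $\tilde{h}(x)$ as a convolution integral $\int h(t)\,\varphi_\sigma(x-t)\,dt$, differentiates under the integral sign directly (deferring the justification of differentiability to the scalar case in \cite{stein_annals_1981}), and uses $\nabla_x \varphi_\sigma(x-t) = \frac{t-x}{\sigma^2}\varphi_\sigma(x-t)$ followed by the change of variables $\delta = t-x$. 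Your approach is arguably cleaner since it avoids repeating the differentiation-under-the-integral argument and makes the dependence on the scalar result explicit; the paper's approach makes the mechanism (the Gaussian score function) more visible. Both yield the same identity with the same level of rigor.
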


\begin{proof}
We are not going to prove differentiability as it is the same argument as in the proof of Lemma~\ref{lemma:stein}, see \citep{stein_annals_1981}.
\begin{align*}
\frac{\partial }{\partial x}  \tilde{h}(x) &= \frac{\partial }{\partial x}  \left( \frac{1}{(2\pi\sigma^2)^{d/2}} \int_{\mathbb{R}^d} h(t) \exp \left( -\frac{1}{2\sigma^2} \|x - t\|^2 \right) dt \right) \\
&= \frac{1}{(2\pi\sigma^2)^{d/2}} \int_{\mathbb{R}^d} \frac{\partial }{\partial x}  \left( \exp \left( -\frac{1}{2\sigma^2} \|x - t\|^2 \right) \right) h(t)^\top dt \\
&= \frac{1}{(2\pi\sigma^2)^{d/2}} \int_{\mathbb{R}^d}  \frac{\partial }{\partial x}  \left( \exp \left( -\frac{1}{2\sigma^2} \|x - t\|^2 \right) \right) h(t)^\top dt \\
&= \frac{1}{(2\pi\sigma^2)^{d/2}} \int_{\mathbb{R}^d}  \frac{t - x}{\sigma^2} \exp \left( -\frac{1}{2\sigma^2} \|x - t\|^2 \right) h(t)^\top dt \ ,
\end{align*}
with a change of variable and definition of expectation, we get the result.
\end{proof}

\begin{lemma}
\label{lemma:lip_ho}
For $L \geq 0, r \geq 0$, let $h : \R^d \mapsto [0, r]$ be defined as follows:
\[
h(x) = \frac{1}{2} \ \text{sign}(x_1) \min\{r, 2 L |x_1|\} + \frac{r}{2} \ ,
\]
where $\text{sign}$ is the sign function with the convention $\text{sign}(0) = 0$.
Then, $h$ is $L$-Lipschitz continuous.
\end{lemma}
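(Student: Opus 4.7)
The function $h$ depends only on the first coordinate $x_1$, so the plan is to reduce the claim to a one-dimensional statement. Specifically, I will show that the map $g : \R \to [0, r]$ defined by $g(t) := \tfrac{1}{2}\,\operatorname{sign}(t)\,\min\{r, 2L|t|\} + \tfrac{r}{2}$ is $L$-Lipschitz, and then invoke the trivial bound $|x_1 - y_1| \leq \|x - y\|$ to conclude $|h(x) - h(y)| = |g(x_1) - g(y_1)| \leq L|x_1 - y_1| \leq L\|x - y\|$.

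The key step is to recognize $g$ as a clipped affine function. A short case analysis in the three regions $t \leq -\tfrac{r}{2L}$, $-\tfrac{r}{2L} \leq t \leq \tfrac{r}{2L}$, and $t \geq \tfrac{r}{2L}$ (using the convention $\operatorname{sign}(0) = 0$) yields the piecewise expression
\begin{equation*}
    g(t) = \begin{cases} 0 & \text{if } t \leq -\tfrac{r}{2L}, \\ Lt + \tfrac{r}{2} & \text{if } |t| \leq \tfrac{r}{2L}, \\ r & \text{if } t \geq \tfrac{r}{2L}, \end{cases}
\end{equation*}
which can be written compactly as $g(t) = \operatorname{clip}_{[0,r]}(Lt + \tfrac{r}{2})$, where $\operatorname{clip}_{[0,r]}(u) := \max(0, \min(r, u))$.

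From here, the Lipschitz bound follows by a standard composition argument: the clipping map is $1$-Lipschitz (it is the composition of $u \mapsto \min(r, u)$ and $u \mapsto \max(0, u)$, each a non-expansive projection onto a half-line), and the affine map $t \mapsto Lt + \tfrac{r}{2}$ is $L$-Lipschitz, so their composition $g$ is $L$-Lipschitz. Alternatively, one can verify the bound directly by noting that on each of the three regions $g$ is either constant or affine with slope $L$, and use the triangle inequality across region boundaries (where $g$ is continuous and the transition values match).

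There is no real obstacle here; the only subtlety worth spelling out is the handling of $t = 0$ under the convention $\operatorname{sign}(0) = 0$, which gives $g(0) = r/2$, consistent with the affine piece and ensuring continuity everywhere. Once the reformulation as a clipped affine function is in place, the conclusion is immediate.
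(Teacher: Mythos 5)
Your proof is correct. The reduction to the one-dimensional map $g(t) = \tfrac12\operatorname{sign}(t)\min\{r,2L|t|\}+\tfrac r2$ is shared with the paper, but from there the two arguments diverge: the paper proceeds by a direct case analysis on $x_1,y_1$ (both zero, one zero, neither zero, with sub-cases according to whether the $\min$ saturates at $r$), whereas you identify $g$ as the clipped affine function $\operatorname{clip}_{[0,r]}(Lt+\tfrac r2)$ and conclude by composing an $L$-Lipschitz affine map with the $1$-Lipschitz projection onto $[0,r]$. Your route is shorter and arguably tighter: the paper's final case asserts $|h(x)-h(y)|=\tfrac12|2Lx_1-2Ly_1|$ ``in both sub-cases,'' which is only an equality when neither $\min$ saturates and requires an extra (omitted) inequality argument in the mixed and doubly-saturated situations, all of which your composition argument absorbs automatically. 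One small caveat: your explicit three-region piecewise formula uses the threshold $\tfrac{r}{2L}$ and so implicitly assumes $L>0$; the degenerate case $L=0$ (where $g\equiv \tfrac r2$ is constant, hence $0$-Lipschitz) should be dispatched separately or handled by noting that the clip representation $\operatorname{clip}_{[0,r]}(Lt+\tfrac r2)$ remains valid there.
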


\begin{proof}
To show that $h$ is $L$-Lipschitz continuous, we need to demonstrate that for all $x, y \in \R^d$:
\[
|h(x) - h(y)| \leq L \norm{x - y}_2 \ .
\]

We write $x = (x_1, \dots, x_d)$ and $y = (y_1, \dots, y_d)$.
In the following cases, only the first coordinate is going to intervene. 
We consider three cases:

\textbf{Case 1:} $x_1 = 0$ and $y_1 = 0$.

In this case, $\text{sign}(x_1) = 0$, $\text{sign}(y_1) = 0$, and $h(y) = h(x) = \frac{r}{2}$ for any $x$.

Thus, $|h(x) - h(y)| = 0 \leq L \norm{x -y}_2$.

\textbf{Case 2:} $x_1 \neq 0$ and $y_1 = 0$ (without loss of generality same as $x_1 = 0 $ and $y_1 \neq 0$).

In this case, $h(x)$ is given by:
\[
h(x) = \frac{1}{2} \text{sign}(x_1) \min\{r, 2 L |x_1| \} + \frac{r}{2} \ ,
\]
and $h(y)$ is given by:
\[
h(y) = \frac{r}{2} \ .
\]

Now, let's consider the difference $|h(x) - h(y)|$:
\[
|h(x) - h(y)| = \left|\frac{1}{2} \text{sign}(x_1) \min\{r, 2 L |x_1| \} + \frac{r}{2} - \frac{r}{2}\right| \ .
\]

If $2L |x_1| \leq r$, then $\min\{r, 2L |x_1|\} = 2L|x_1|$ and the expression becomes:
\[
\left|\frac{1}{2} (2Lx_1) + \frac{r}{2} - \frac{r}{2}\right| = L|x_1| \leq L \norm{x-y}_2 \ .
\]

If $2L |x_1| > r$, then $\min\{r, 2L |x_1|\} = r$ and the expression becomes:
\[
\left|\frac{1}{2} \text{sign}(x_1) r + \frac{r}{2} - \frac{r}{2}\right| = \frac{r}{2} \leq L |x_1| \leq L \norm{x-y}_2 \ .
\]

In both cases, $|h(x) - h(y)| \leq L \norm{x -y}_2$, therefore, in the case where $x_1 \neq 0$ and $y_1 = 0$, $|h(x) - h(y)|$ is $L$-Lipschitz. Same result goes for  $x_1 = 0$ and $y_1 \neq 0$.

\textbf{Case 3:} $x_1 \neq 0$ and $y_1 \neq 0$.

Without loss of generality, assume $|x_1| \geq |y_1|$. Consider
\[
|h(x) - h(y)| = \frac{1}{2} \left| \text{sign}(x_1) \min\{r, 2 L |x_{1}|\} - \text{sign}(y_1) \min\{r, 2 L |y_{1}|\}\right| \ .
\]

Let's consider two sub-cases:

\textbf{Sub-case 3a:} If $2 L |x_{1}| \leq r$, then $\min\{r, 2 L |x_{1}| \} = 2 L |x_{1}|$.

\textbf{Sub-case 3b:} If $2 L |x_{1}| > r$, then $\min\{r, 2 L |x_{1}|\} = r$.

Similarly, for $|y_1|$, we have $\min\{r, 2 L |y_{1}|\} = 2 L |y_{1}|$ if $2 L |y_{1}| \leq r$ and $\min\{r, 2 L |y_{1}|\} = r$ if $2 L |y_{1}| > r$.

In both sub-cases, we can write:
\[
|h(x) - h(y)| = \frac{1}{2} |2 L x_{1} - 2 L y_{1}| = L |x_{1} - y_{1}| \leq L \norm{x-y}_2 \ .
\]

Therefore, $h$ is $L$-Lipschitz continuous.
\end{proof}


\begin{thm*}[1st part]
Let $f: \R^d \mapsto [0, r]$ a Lipschitz continuous classifier and $\tilde{f}(x) = \E_{\delta\sim {\cal N}(0, \sigma^2 I)}[ f(x + \delta)]$ the associated smoothed classifier. Then,
\begin{align*}
    L(\tilde{f}) \leq
    L(f) \operatorname{erf}\left(\frac{r}{2^\frac{3}{2}L(f){\sigma}}\right) \ .
\end{align*}
\end{thm*}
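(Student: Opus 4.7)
The plan is to bound $\|\nabla \tilde{f}(x)\|$ uniformly in $x$ by reducing, via Stein's identity, to a one-dimensional variational problem on the line. First, Lemma~\ref{lemma:stein} gives $\nabla \tilde{f}(x) = \sigma^{-2}\,\mathbb{E}[\delta\, f(x+\delta)]$; since $\mathbb{E}[\delta]=\mathbf{0}$, we may center $f$ by subtracting the constant $r/2$, obtaining
\begin{align*}
\nabla \tilde{f}(x) = \frac{1}{\sigma^2}\,\mathbb{E}_{\delta\sim\mathcal{N}(0,\sigma^2 I)}\!\left[\delta\,\bigl(f(x+\delta) - \tfrac{r}{2}\bigr)\right].
\end{align*}
Fix a unit direction $u \in \mathbb{R}^d$, decompose $\delta = Zu + \delta_\perp$ with $Z := u^\top \delta \sim \mathcal{N}(0,\sigma^2)$ independent of $\delta_\perp$, and set $h_{\delta_\perp}(z) := f(x + zu + \delta_\perp) - r/2$. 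Then $h_{\delta_\perp}$ is $L(f)$-Lipschitz in $z$ and takes values in $[-r/2,r/2]$, and conditioning on $\delta_\perp$ yields $u^\top \nabla \tilde{f}(x) = \sigma^{-2}\,\mathbb{E}_{\delta_\perp}\bigl[\mathbb{E}_Z[Z\,h_{\delta_\perp}(Z)]\bigr]$, so it suffices to bound $\mathbb{E}_Z[Z\,h(Z)]$ uniformly over all such $h$.

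Next I would solve the one-dimensional problem
\begin{align*}
\mathcal{M}(L,r,\sigma) := \sup\bigl\{\,\mathbb{E}_{Z\sim\mathcal{N}(0,\sigma^2)}[Z\,h(Z)] \,:\, h : \mathbb{R}\to[-\tfrac{r}{2},\tfrac{r}{2}],\ h\ \text{is}\ L\text{-Lipschitz}\,\bigr\}
\end{align*}
with $L = L(f)$. The symmetrization $h \mapsto \tfrac{1}{2}(h(\cdot) - h(-\cdot))$ preserves both constraints and leaves the objective unchanged (since the law of $Z$ is symmetric), so we may restrict to odd $h$. For odd $h$ the integral $\mathbb{E}[Zh(Z)] = 2\int_0^\infty z\,h(z)\,\phi_\sigma(z)\,dz$ has a nonnegative weight $z\,\phi_\sigma(z)$ on $[0,\infty)$, so it is maximized pointwise by the largest admissible function; starting from $h(0)=0$ and saturating the Lipschitz bound yields $h^\star(z) = \mathrm{sign}(z)\min\{L|z|,\ r/2\}$, which is precisely the function of Lemma~\ref{lemma:lip_ho}.

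Finally, I would compute $\mathbb{E}[Z\,h^\star(Z)]$ explicitly. Let $a := r/(2L)$ and use the identity $z\,\phi_\sigma(z) = -\sigma^2 \phi_\sigma'(z)$. Integration by parts on $\{|z|\leq a\}$ produces a boundary term $-2La\sigma^2 \phi_\sigma(a)$ and a bulk term $L\sigma^2 \operatorname{erf}\bigl(a/(\sigma\sqrt{2})\bigr)$, while the tail piece on $\{|z|>a\}$ contributes $r\sigma^2 \phi_\sigma(a)$. Because $2La = r$, the boundary and tail terms cancel exactly, leaving $\mathcal{M}(L,r,\sigma) = L\sigma^2\,\operatorname{erf}\!\bigl(r/(2^{3/2}L\sigma)\bigr)$. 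Dividing by $\sigma^2$ and taking the supremum over $u$ and $x$ gives $L(\tilde{f}) \leq L(f)\,\operatorname{erf}\!\bigl(r/(2^{3/2}L(f)\sigma)\bigr)$. The main obstacle is the variational step: one must argue rigorously that the pointwise upper envelope $h^\star$ attains the supremum, but the symmetrization reduction to odd functions combined with the pointwise maximization on $[0,\infty)$ handles this cleanly, after which everything reduces to a bookkeeping exercise in Gaussian integrals.
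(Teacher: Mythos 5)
Your proposal is correct and follows essentially the same route as the paper: Stein's lemma, reduction to a one-dimensional extremal problem along the chosen direction, the piecewise-linear extremal function $\mathrm{sign}(z)\min\{L|z|,\,r/2\}$ of Lemma~\ref{lemma:lip_ho}, and the same $\operatorname{erf}$ integral. Your conditioning on $\delta_\perp$ together with the odd-symmetrization of $h$ is just the dual phrasing of the paper's reflection $\delta\mapsto\delta'$ of the noise, and your explicit Gaussian computation agrees with the paper's.
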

\begin{proof}
For ease of notation we note $L = L(f)$ ,
we are interested in the following:
\begin{align*}
\label{eq: obj}
    J(\sigma, L)
    = 
    \sup_{\substack{h} : L(h) = L} \ \
    \sup_{x\in \R^d}
    \|\nabla \tilde{h}(x)\|_2
    = 
    \sup_{\substack{h} : L(h) = L} \ \
    \sup_{x\in \R^d} \ \
    \sup_{\substack{v\in \R^d: \|v\|=1}} 
    v^\top\nabla \tilde{h}(x) \ .
\end{align*}

First, we will derive an upper bound on $J(\sigma, L)$. Consider any $x\in \R^d$, any $h$ Lipschitz continuous s.t $h(x) \in [0,r]$,
and any $v\in \R^d$ with $\|v\| = 1$.
Any $\delta\in \R^d$ can be decomposed as $\delta = \delta^\perp + \Tilde{\delta}$, where $\Tilde{\delta} = (v^T\delta)v$ and $\delta^\perp \perp v$.
Let $\delta' = \delta^\perp - \Tilde{\delta}$.
That is, $\delta'$ is the reflection of the vector $\delta$ with respect to the hyperplane that is normal to $v$.
If $\delta\sim {\cal N}(0, \sigma^2 I)$, then $\delta'\sim {\cal N}(0, \sigma^2 I)$ because ${\cal N}(0, \sigma^2 I)$ is radially symmetric. Moreover, $v^T\delta' = -v^T\delta$. Hence,
\begin{align*}
    \E_{\delta\sim {\cal N}(0, \sigma^2 I)}[v^\top\delta h(x+\delta)]
    = 
    \E_{\delta\sim {\cal N}(0, \sigma^2 I)}[v^\top\delta' h(x+\delta')]
    = - \E_{\delta\sim {\cal N}(0, \sigma^2 I)}[v^\top\delta h(x+\delta')] \ .
\end{align*}
Using the above, we have the following, using Stein's Lemma~\ref{lemma:stein}:
\begin{align*}
    v^\top\nabla \tilde{h}(x)
    &= \frac{1}{\sigma^2 }\E_{\delta\sim {\cal N}(0, \sigma^2 I)}[v^T\delta h(x+\delta)]\\
    &=
    \frac{1}{2\sigma^2 }\E_{\delta\sim {\cal N}(0, \sigma^2 I)}[v^T\delta (h(x+\delta) - h(x+\delta'))]\\
    &\leq
    \frac{1}{2\sigma^2 }\E_{\delta\sim {\cal N}(0, \sigma^2 I)}[|v^T\delta|~|(h(x+\delta) - h(x+\delta'))|]\\
    &\stackrel{(i)}{\leq}
    \frac{1}{2\sigma^2 }\E_{\delta\sim {\cal N}(0, \sigma^2 I)}[|v^T\delta| \min\{r, 2L|v^T\delta|\}]\\
    &\stackrel{(ii)}{=}
    \frac{1}{2\sigma^2 }\E_{\delta\sim {\cal N}(0, \sigma^2 I)}[|\delta_1| \min\{r, 2L|\delta_1|\}]\\
    &\stackrel{(iii)}{=}
    \frac{1}{2\sigma^2 }\E_{z\sim {\cal N}(0, \sigma^2)}[|z| \min\{r, 2L |z|\}] \ ,
\end{align*}
where $(i)$ follows from the Lipschitz assumption on $h$ and the fact that $\|\delta-\delta'\| = \| 2\Tilde{\delta} \| = \|  2(v^T\delta)v\| = 2|v^T\delta|$ and that $|h(x+\delta) - h(x+\delta')| \leq r$,
$(ii)$ follows by choosing the canonical unit vector $v = (1, 0, \ldots, 0)$ because the previous expression does not depend on the direction of $v$,
and $(iii)$ follows by simply rewriting the expression in terms of $z$.

Now, we will derive a lower bound on $J(\sigma, L)$. For this, we choose a specific $\bar{h} \in [0, r]$ as 
$\bar{h}(x) = \frac{1}{2}\sign(x_1) \min\{r, 2 L |x_1|\} + r/2$, with $\sign(0)=0$.
Using Lemma~\ref{lemma:lip_ho}, we have that $\bar{h}$ is $L$-Lipschitz. We choose a specific $x = 0$ and specific unit vector $v_0 = (1, 0, \ldots, 0)$.
For this choice, note that $\bar{h}(0) = r/2$ and $v_0^\top\delta = \delta_1$.
Then,
\begin{align*}
    J(\sigma, L)
    \geq 
    v_0^\top \nabla \tilde{\bar{h}}
    &= \frac{1}{\sigma^2 }\E_{\delta\sim {\cal N}(0, \sigma^2 I)}[v_0^\top\delta \bar{h}(\delta)^\top]\\ 
    &= \frac{1}{2\sigma^2 }\E_{\delta\sim {\cal N}(0, \sigma^2 I)}[ |\delta_1| \min\{r, 2L|\delta_1|\} ] + 0\\
    &=\frac{1}{2\sigma^2 }\E_{z\sim {\cal N}(0, \sigma^2 )}[ |z| \min\{r, 2L|z|\} ] \ .
\end{align*}

Combining the upper and lower bounds, we have the following equality:
\begin{align}
    J(\sigma, L)
    = \frac{1}{2\sigma^2 } \E_{z\sim {\cal N}(0,\sigma^2)}\left[ \min\left\{r|z|, 2Lz^2 \right\}\right] \ .
\end{align}

We will now compute the above expression exactly:
\begin{align*}
    \frac{1}{2\sigma^2 } \E_{z\sim {\cal N}(0,\sigma^2)} & \left[ \min\left\{r|z|, 2Lz^2 \right\}\right] \\
    &= \frac{1}{2\sigma^2 } \int_{-\frac{r}{2L}}^{\frac{r}{2L}}
    \frac{1}{\sqrt{2 \pi \sigma^2}} \exp{\left(\frac{-z^2}{2\sigma^2}\right)} 2 L z^2 dz\\
    &- \frac{1}{2\sigma^2 } \int_{-\infty}^{-\frac{r}{2L}}
    \frac{r}{\sqrt{2 \pi \sigma^2}} \exp{\left(\frac{-z^2}{2\sigma^2}\right)} z dz\\
    &+ \frac{1}{2\sigma^2 } \int_{\frac{r}{2L}}^{+\infty}
    \frac{1}{\sqrt{2 \pi \sigma^2}} \exp{\left(\frac{-z^2}{2\sigma^2}\right)} z dz \\
    &= \frac{1}{\sigma^2 } \int_{-\frac{r}{2L}}^{\frac{r}{2L}}
    \frac{1}{\sqrt{2 \pi \sigma^2}} \exp{\left(\frac{-z^2}{2\sigma^2}\right)} L z^2 dz 
    + \dfrac{\mathrm{e}^{-\frac{r}{8L^2{\sigma}^2}}}{2^\frac{3}{2}\sqrt{{\pi}}\left|{\sigma}\right|}
    + \dfrac{\mathrm{e}^{-\frac{r}{8L^2{\sigma}^2}}}{2^\frac{3}{2}\sqrt{{\pi}}\left|{\sigma}\right|}\\
    &= L\operatorname{erf}\left(\frac{r}{2^\frac{3}{2}L{\sigma}}\right)-\dfrac{\mathrm{e}^{-\frac{r}{8L^2{\sigma}^2}}}{\sqrt{2}\sqrt{{\pi}}\left|{\sigma}\right|} 
    + \dfrac{\mathrm{e}^{-\frac{r}{8L^2{\sigma}^2}}}{2^\frac{3}{2}\sqrt{{\pi}}\left|{\sigma}\right|}
    + \dfrac{\mathrm{e}^{-\frac{r}{8L^2{\sigma}^2}}}{2^\frac{3}{2}\sqrt{{\pi}}\left|{\sigma}\right|}\\
    &= L\operatorname{erf}\left(\frac{r}{2^\frac{3}{2}L{\sigma}}\right) \ .
\end{align*}
\end{proof}

\begin{rmk}
Jensen's inequality gives the following simple upper bound on $J(\sigma, L)$:
\begin{align*}
    J(\sigma, L)
    &= \frac{1}{2\sigma^2 } \E_{z\sim {\cal N}(0,\sigma^2)}\left[ \min\left\{r|z|, 2Lz^2 \right\}\right]\\
    &\leq \min\left\{ \E_{z\sim {\cal N}(0,\sigma^2)}[r|z|/{2\sigma^2}], \E_{z\sim {\cal N}(0,\sigma^2)}[Lz^2 /{\sigma^2}] \right\}
    \\
    &\leq \min\left\{ \frac{r}{\sqrt{2\pi\sigma^2}}, L \right\} \ .
\end{align*}
Hence, $J(\sigma, L)$ is no worse than the Lipschitz constant of the original classifier $f$, or its Gaussian smoothed counterpart without the Lipschitz assumption, the latter bound is twice smaller than the previous original derivation in \cite[Appendix A]{salman_provably_2020}.
\end{rmk}


\begin{thm*}[2nd part]
Let $f: \R^d \mapsto \Delta^{c-1}_r$ a Lipschitz continuous classifier and $\tilde{h}(x) = \E_{\delta\sim {\cal N}(0, \sigma^2 I)}[ f(x + \delta)]$ the associated smoothed classifier. Then,
\begin{align*}
    L(\tilde{f}) \leq
    L(f)\operatorname{erf}\left(\frac{r}{2^\frac{3}{2}L(f){\sigma}}\right) \ .
\end{align*}
\end{thm*}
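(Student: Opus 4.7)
The plan is to adapt the 1st part's three-step strategy to the vector-valued case $f: \R^d \to \Delta^{c-1}_r$, with the extended Stein lemma replacing the scalar Stein lemma. First, I represent $L(\tilde f) = \sup_x \|J(x)\|_{2\to 2} = \sup_{x,\,u\in S^{c-1},\,e\in S^{d-1}} u^\top J(x) e$, where $J(x) := \partial\tilde f(x)/\partial x$. Applying Lemma~\ref{lemma:stein_extended} gives $u^\top J(x) e = \sigma^{-2}\,\mathbb E[(e^\top\delta)\, u^\top f(x+\delta)]$. Writing $h_u(y) := u^\top f(y)$, this is the scalar quantity $\sigma^{-2}\,\mathbb E[(e^\top\delta)\, h_u(x+\delta)]$, with $L(h_u) \le \|u\|_2\,L(f) = L(f)$.

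Second, I apply the reflection symmetrization from the 1st part. With $\delta' := \delta - 2(e^\top\delta)\,e$, the Gaussian law is invariant and $e^\top\delta' = -e^\top\delta$, so
\[
u^\top J(x) e \;=\; \tfrac{1}{2\sigma^2}\,\mathbb E\bigl[(e^\top\delta)\,(h_u(x+\delta) - h_u(x+\delta'))\bigr] \;\le\; \tfrac{1}{2\sigma^2}\,\mathbb E\bigl[|e^\top\delta|\cdot|h_u(x+\delta) - h_u(x+\delta')|\bigr].
\]
I then combine the Lipschitz bound $|h_u(x+\delta) - h_u(x+\delta')| \le L(h_u)\cdot 2|e^\top\delta| \le 2L(f)|e^\top\delta|$ with a range bound $|h_u(x+\delta) - h_u(x+\delta')| \le R$ derived from $f(x+\delta), f(x+\delta') \in \Delta^{c-1}_r$. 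Substituting $z = e^\top\delta \sim \mathcal N(0, \sigma^2)$ reduces the right-hand side to
\[
\tfrac{1}{2\sigma^2}\,\mathbb E_z\bigl[|z|\min\{2L(f)|z|,\, R\}\bigr] \;=\; L(f)\operatorname{erf}\!\bigl(R/(2^{3/2}\,L(f)\,\sigma)\bigr),
\]
exactly the integral whose closed form is computed in the 1st part's proof; choosing $R = r$ then yields the stated bound.

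The main obstacle is justifying the range constant $R = r$: the naive $\ell_2$-diameter of $\Delta^{c-1}_r$ is $r\sqrt 2$ (attained between two vertices $re_i, re_j$), and a direct Cauchy--Schwarz with $\|u\| = 1$ only gives $|h_u(x+\delta) - h_u(x+\delta')| \le r\sqrt 2$, which yields the weaker $\operatorname{erf}(r/(2L(f)\sigma))$ appearing in Eq.~(\ref{eq:bound_lip_sigma}) of the main text. To sharpen from $r\sqrt 2$ down to $r$, I expect to exploit the affine constraint $\mathbf 1^\top f \equiv r$: since $\mathbf 1^\top(f(x+\delta) - f(x+\delta')) = 0$, the inner product $u^\top J(x) e$ is invariant under $u \mapsto u - (\mathbf 1^\top u/c)\mathbf 1$, so WLOG $u \in \mathbf 1^\perp$. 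Combined with a coordinatewise analysis of $|h_u(x+\delta) - h_u(x+\delta')|$ that uses the pointwise bounds $0 \le f_k \le r$ rather than the gross $\ell_2$ diameter, this should recover the scalar range constant $r$ as in the 1st part. This range-refinement is the only nontrivial technical step beyond a direct transcription of the 1st-part proof.
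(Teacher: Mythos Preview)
Your Stein-lemma/reflection reduction to the scalar integral
\[
\tfrac{1}{2\sigma^{2}}\,\E_{z\sim\mathcal N(0,\sigma^{2})}\bigl[|z|\min\{2L(f)|z|,\,R\}\bigr]
\;=\; L(f)\operatorname{erf}\!\Bigl(\tfrac{R}{2^{3/2}L(f)\sigma}\Bigr)
\]
is exactly the paper's argument, and with $R=r\sqrt 2$ (the $\ell_2$-diameter of $\Delta^{c-1}_r$) you correctly obtain $L(f)\operatorname{erf}\bigl(r/(2L(f)\sigma)\bigr)$. That is also what the paper's own proof of the 2nd part concludes on its final line, and it matches Eq.~(\ref{eq:bound_lip_sigma}) in the main text. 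The $2^{3/2}$ in the displayed appendix statement is a typo; the paper never proves it.

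Your proposed sharpening from $R=r\sqrt 2$ to $R=r$ cannot work, and in fact the target inequality is false. The projection $u\mapsto u-(\mathbf 1^\top u/c)\,\mathbf 1$ is legitimate (since $\mathbf 1^\top J(x)=0$), but it does not shrink the oscillation of $h_u$ on $\Delta^{c-1}_r$: for $u=(e_1-e_2)/\sqrt 2\in\mathbf 1^\perp$ with $\|u\|_2=1$ one has $h_u(re_1)-h_u(re_2)=r\sqrt 2$. The ``coordinatewise'' route only gives $|u^\top(p-q)|\le\|u\|_\infty\|p-q\|_1\le 2r\|u\|_\infty$, which for the same $u$ again returns $r\sqrt 2$. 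Worse, the $2^{3/2}$ bound is violated outright: for $c=2$ take $f(x)=(g(x),\,r-g(x))\in\Delta^{1}_r$ with $g$ the extremal scalar function from the 1st part. Then $L(f)=\sqrt 2\,L(g)$, $L(\tilde f)=\sqrt 2\,L(\tilde g)=\sqrt 2\,L(g)\operatorname{erf}\bigl(r/(2^{3/2}L(g)\sigma)\bigr)=L(f)\operatorname{erf}\bigl(r/(2L(f)\sigma)\bigr)$, which strictly exceeds $L(f)\operatorname{erf}\bigl(r/(2^{3/2}L(f)\sigma)\bigr)$. So stop at $R=r\sqrt 2$: you have already proved the correct theorem.
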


\begin{proof}
We also note here $L = L(f)$ and use the same notation as in the previous proof.
Here  $h: \R^d \mapsto \Delta^{c-1}$, we have to consider a bound on the maximum singular value of the Jacobian,
\begin{align*}
    J(\sigma, L)
    = 
    \sup_{\substack{h : L(h) = L}} \ \
    \sup_{x\in \R^d} \ \
    \sup_{\substack{v\in \R^d: \|v\|=1\\ u\in \R^c: \|u\|=1}} 
    v^\top \frac{\partial}{\partial x} \tilde{h}(x) u \ .
\end{align*}

As in the previous proof, we will derive an upper bound on $J(\sigma, L)$. Consider any $x\in \R^d$, any $h$ $L$-Lipschitz continuous s.t $h(x) \in \Delta^{c-1}_r$,
and any $v\in \R^d$ with $\|v\| = 1$, $u\in \R^c$ with $\|u\| = 1$.

Any $\delta\in \R^d$ can be decomposed as $\delta = \delta^\perp + \Tilde{\delta}$, where $\Tilde{\delta} = (v^T\delta)v$ and $\delta^\perp \perp v$.
Let $\delta' = \delta^\perp - \Tilde{\delta}$.
That is, $\delta'$ is the reflection of the vector $\delta$ with respect to the hyperplane that is normal to $v$.
If $\delta\sim {\cal N}(0, \sigma^2 I)$, then $\delta'\sim {\cal N}(0, \sigma^2 I)$ because ${\cal N}(0, \sigma^2 I)$ is radially symmetric. Moreover, $v^T\delta' = -v^T\delta$. Hence,

\begin{align*}
    \E_{\delta\sim {\cal N}(0, \sigma^2 I)}[v^\top\delta h(x+\delta)^\top u]
    = 
    \E_{\delta\sim {\cal N}(0, \sigma^2 I)}[v^\top\delta' h(x+\delta')^\top u]
    = - \E_{\delta\sim {\cal N}(0, \sigma^2 I)}[v^\top\delta h(x+\delta')^\top u] \ .
\end{align*}

Using the above, we have the following, using extended Stein's Lemma~\ref{lemma:stein_extended}:
\begin{align*}
    v^\top \frac{\partial}{\partial x} \tilde{h}(x)^\top v
    &= \frac{1}{\sigma^2 }\E_{\delta\sim {\cal N}(0, \sigma^2 I)}[v^T\delta h(x+\delta)^\top u]\\
    &=
    \frac{1}{2\sigma^2 }\E_{\delta\sim {\cal N}(0, \sigma^2 I)}[v^T\delta (h(x+\delta) - h(x+\delta')^\top u)]\\
    &\leq
    \frac{1}{2\sigma^2 }\E_{\delta\sim {\cal N}(0, \sigma^2 I)}[|v^T\delta|~|(h(x+\delta) - h(x+\delta'))^\top u|]\\
    &\stackrel{(i)}{\leq}
    \frac{1}{2\sigma^2 }\E_{\delta\sim {\cal N}(0, \sigma^2 I)}[|v^T\delta| \min\{r\sqrt{2}, 2L |v^T\delta|\}]\\
    &\stackrel{(ii)}{=}
    \frac{1}{2\sigma^2 }\E_{\delta\sim {\cal N}(0, \sigma^2 I)}[|\delta_1| \min\{r\sqrt{2}, 2L |\delta_1|\}]\\
    &\stackrel{(iii)}{=}
    \frac{1}{2\sigma^2 }\E_{z\sim {\cal N}(0, \sigma^2)}[|z| \min\{r\sqrt{2}, 2L |z|\}] \ ,
\end{align*}
where $(i)$ follows from the Lipschitz assumption on $h$ and the fact that $\|\delta-\delta'\| = \| 2\Tilde{\delta} \| = \|  2(v^T\delta)v\| = 2|v^T\delta|$ and that $|(h(x+\delta) - h(x+\delta'))^\top u| = \norm{(h(x+\delta) - h(x+\delta'))^\top u} \leq \norm{h(x+\delta) - h(x+\delta')} \norm{u} \leq r\sqrt{2} $,  $(ii)$ follows by choosing the canonical unit vector $v = (1, 0, \ldots, 0)$ because the previous expression does not depend on the direction of $v$, and $(iii)$ follows by simply rewriting the expression in terms of $z$.

Now, we will derive a lower bound on $J(\sigma, L)$. For this, we choose a specific $\bar{h} \in [0, r]^c$ as $\bar{h}_1(x) = \sign(x_1) \min\{\frac{r\sqrt{2}}{2}, L|x_1|\} + \frac{r\sqrt{2}}{2}$, with $\sign(0)=0$ and $\bar{h}_i(x) = 0$ for $i \in \{2, \dots c\}$.
Using Lemma~\ref{lemma:lip_ho} $\bar{h}_1$ is $L$-Lipschitz and $h_i$ are $0$-Lipschitz for $i \in \{2, \dots c\}$, thus $\bar{h}$ is $L$-Lipschitz.

a specific $x = 0$ and specific unit vectors $\bar{u} = (1, 0, \ldots, 0)$, $\bar{v} = (1, 0, \ldots, 0)$.
For this choice, note that $\bar{h}_1(0) = r\sqrt{2}/2$ and $\bar{v}^\top\delta = \delta_1$.
Then,
\begin{align*}
    J(\sigma, L)
    \geq 
    \bar{v}^\top \frac{\partial}{\partial x} \tilde{\bar{h}}(0) \bar{u}
    &= \frac{1}{\sigma^2 }\E_{\delta\sim {\cal N}(0, \sigma^2 I)}[\bar{v}^\top\delta \bar{h}(\delta)^\top \bar{u}] \\
    &= \frac{1}{2\sigma^2 }\E_{\delta\sim {\cal N}(0, \sigma^2 I)}[ |\delta_1| \min\{r\sqrt{2}, 2L |\delta_1|\} ] + 0\\
    &=\frac{1}{2\sigma^2 }\E_{z\sim {\cal N}(0, \sigma^2 )}[ |z| \min\{r\sqrt{2}, 2L|z|\} ] \ .
\end{align*}

Combining the upper and lower bounds, we have the following equality:
\begin{align}
    J(\sigma, L)
    = \frac{1}{2\sigma^2 } \E_{z\sim {\cal N}(0,\sigma^2)}\left[ \min\left\{r\sqrt{2}|z|, 2L (h) z^2 \right\}\right] \ .
\end{align}

We will now compute the above expression exactly:
\begin{align*}
    \frac{1}{2\sigma^2 } \E_{z\sim {\cal N}(0,\sigma^2)} & \left[ \min\left\{\sqrt{2} |z|, 2 L z^2 \right\}\right] \\
    &= \frac{1}{2\sigma^2 } \int_{-\frac{r}{\sqrt{2}L}}^{\frac{r}{\sqrt{2}L}}
    \frac{1}{\sqrt{2 \pi \sigma^2}} \exp{\left(\frac{-z^2}{2\sigma^2}\right)} 2 L z^2 dz\\
    &- \frac{1}{2\sigma^2 } \int_{-\infty}^{-\frac{r}{\sqrt{2}L}}
    \frac{1}{\sqrt{2 \pi \sigma^2}} \exp{\left(\frac{-z^2}{2\sigma^2}\right)} z dz\\
    &+ \frac{1}{2\sigma^2 } \int_{\frac{r}{\sqrt{2}L}}^{+\infty}
    \frac{1}{\sqrt{2 \pi \sigma^2}} \exp{\left(\frac{-z^2}{2\sigma^2}\right)} z dz \\
    &= \frac{1}{\sigma^2 } \int_{-\frac{r}{\sqrt{2}L}}^{\frac{1}{\sqrt{2}L}}
    \frac{1}{\sqrt{2 \pi \sigma^2}} \exp{\left(\frac{-z^2}{2\sigma^2}\right)} L z^2 dz 
    + \dfrac{\mathrm{e}^{-\frac{r}{4L^2{\sigma}^2}}}{2^\frac{3}{2}\sqrt{{\pi}}\left|{\sigma}\right|}
    + \dfrac{\mathrm{e}^{-\frac{r}{4L^2{\sigma}^2}}}{2^\frac{3}{2}\sqrt{{\pi}}\left|{\sigma}\right|}\\
    &= L\operatorname{erf}\left(\frac{r}{2L{\sigma}}\right)
    -\dfrac{\mathrm{e}^{-\frac{r}{4L^2{\sigma}^2}}}{\sqrt{2}\sqrt{{\pi}}\left|{\sigma}\right|} 
    + \dfrac{\mathrm{e}^{-\frac{r}{4L^2{\sigma}^2}}}{2^\frac{3}{2}\sqrt{{\pi}}\left|{\sigma}\right|}
    + \dfrac{\mathrm{e}^{-\frac{r}{4L^2{\sigma}^2}}}{2^\frac{3}{2}\sqrt{{\pi}}\left|{\sigma}\right|}\\
    &= L\operatorname{erf}\left(\frac{r}{2L{\sigma}}\right) \ .
\end{align*}
\end{proof}

\begin{rmk}
For $h: \R^d \mapsto \Delta^{c-1}$ a Lipschitz classifier,
Jensen's inequality gives the following simple upper bound on $J(\sigma, L)$:
    \begin{align*}
        J(\sigma, L)
    &= \frac{1}{2\sigma^2 } \E_{z\sim {\cal N}(0,\sigma^2)}\left[ \min\left\{r\sqrt{2} |z|, 2Lz^2 \right\}\right] \\
    &\leq \min\left\{ \E_{z\sim {\cal N}(0,\sigma^2)}[r\sqrt{2}|z|/{2\sigma^2}], \E_{z\sim {\cal N}(0,\sigma^2)}[Lz^2 /{\sigma^2}] \right\} \\
    &\leq \min\left\{ \frac{r}{\sqrt{\pi\sigma^2}}, L \right\} \ .
    \end{align*}
Hence, $J(\sigma, L)$ is no worse than the Lipschitz constant of the original classifier $h$, or its Gaussian smoothed counterpart without the Lipschitz assumption, the latter bound is $\sqrt{2}$ times bigger than the bi-class case with $L(\tilde{h}_k)$. 
\end{rmk}


\subsection{Proof of Proposition~\ref{prop:sigma_star_lip}}
\label{proof:sigma_star_lip}

\begin{prop*}
The optimal value $\sigma^*$ that maximizes the gap between the bounds of Eq.~(\ref{eq:bound_lip_sigma_element_wise}) is:
\begin{align*}
    \sigma^* = \frac{r}{L(s_k^r \circ f) \sqrt{2 \pi}} 
    \ \ \ \ \text{giving} \ \ \ \
    L(\tilde{f}_k) \leq \operatorname{erf}(\sqrt{\pi}/2) \ L(s_k^r \circ f) \lesssim 0.79 ~ L(s_k^r \circ f)\ .
\end{align*}
Similarly, for Eq.~(\ref{eq:bound_lip_sigma}):
\begin{align*}
    \sigma^* = \frac{r}{L(s^r \circ f) \sqrt{\pi}}
    \ \ \ \ \text{giving} \ \ \ \
    L(\tilde{f})  \leq \operatorname{erf}(\sqrt{\pi}/2) \ L(s^r \circ f) \lesssim 0.79 ~ L(s^r \circ f)\ .
\end{align*}
\end{prop*}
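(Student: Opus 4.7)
The plan is to recognize $\sigma^*$ as the crossover point between the two branches of the loose $\min$ bound on the right of Eq.~(\ref{eq:bound_lip_sigma_element_wise}), and then to verify by a one-sided monotonicity argument that the gap between the tight $L\operatorname{erf}(\cdot)$ bound and $\min\{r/\sqrt{2\pi\sigma^2},\,L\}$ is maximized exactly there. Writing $L := L(s_k^r \circ f)$ for brevity, for small $\sigma$ the bound $L$ is active and the gap equals $L\bigl(1 - \operatorname{erf}(r/(2\sqrt{2}L\sigma))\bigr)$, while for large $\sigma$ the hyperbolic branch is active and the gap equals $r/\sqrt{2\pi\sigma^2} - L\operatorname{erf}(r/(2\sqrt{2}L\sigma))$. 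Setting $r/\sqrt{2\pi\sigma^2} = L$ singles out the unique crossover $\sigma^* = r/(L\sqrt{2\pi})$, and both one-sided expressions for the gap agree there at $L(1-\operatorname{erf}(\sqrt{\pi}/2))$, so the gap function is continuous across this kink.

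Next, I would confirm optimality of $\sigma^*$ by a short derivative computation. Using $\operatorname{erf}'(u) = (2/\sqrt{\pi})e^{-u^2}$ together with the chain rule (with $u = r/(2\sqrt{2}L\sigma)$, so $du/d\sigma = -u/\sigma$), the derivative of the gap on the left branch reduces to $\frac{r}{\sqrt{2\pi}\sigma^2}e^{-u^2} > 0$, and on the right branch to $\frac{r}{\sqrt{2\pi}\sigma^2}(e^{-u^2}-1) < 0$. Hence the gap is strictly increasing on $(0,\sigma^*)$ and strictly decreasing on $(\sigma^*,\infty)$, confirming that $\sigma^*$ is the global maximizer. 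Plugging $\sigma^* = r/(L\sqrt{2\pi})$ into the tight bound, the $r$ and $L$ factors cancel cleanly and the erf argument collapses to $\sqrt{\pi}/2$, yielding $L(\tilde{f}_k) \leq L\operatorname{erf}(\sqrt{\pi}/2) \lesssim 0.79\,L$, as claimed.

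The argument for the full-vector case Eq.~(\ref{eq:bound_lip_sigma}) is structurally identical: the loose min is $\min\{r/\sqrt{\pi\sigma^2},\,L\}$ with $L := L(s^r \circ f)$, so the crossover now lies at $\sigma^* = r/(L\sqrt{\pi})$, and the same monotonicity reasoning applies. What is pleasant, and essentially the only non-mechanical step in the proof, is that the two different numerical constants inside the $\min$ and the $\operatorname{erf}$ conspire so that the erf argument again reduces to $\sqrt{\pi}/2$; this is what makes the $0.79$ factor universal. There is no real obstacle beyond the derivative check and the algebraic cancellation; the only care needed is continuity at the kink $\sigma^*$ before one concludes from one-sided signs.
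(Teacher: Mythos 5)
Your proof is correct and follows essentially the same route as the paper's: split the gap function at the crossover of the two branches of the $\min$, check one-sided monotonicity, and evaluate the erf bound at $\sigma^*$. If anything, your version is slightly more careful than the paper's -- you keep the $1/\sigma^2$ factor in the derivative of the erf term and correctly conclude that the gap increases up to the kink (the paper's statement that the supremum on the first region is attained as $\sigma\to 0$ is a slip; the positive derivative there means the maximum sits at the right endpoint, exactly as you argue).
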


\begin{proof}
For \(\gamma \geq 0 \), we seek \(\sigma^*\) that maximizes the gap between the bounds of Eq.~(\ref{eq:bound_lip_sigma_element_wise}) with respect to $\sigma$:
\begin{align*}
    \sigma^* = \argmax_{\sigma > 0} 
    \left\{ 
    \min\left\{ \gamma, \frac{r}{\sqrt{2 \pi \sigma^2}}\right\}
    - \gamma\operatorname{erf}\left(\frac{r}{2^{3/2} \gamma \sigma}\right)
    \right\} \ .
\end{align*}
To find the value of \(\sigma^*\) that maximizes the given function, we'll determine the critical points.
Let
\[ g(\sigma) = \min\left\{ \gamma, \frac{r}{\sqrt{2 \pi \sigma^2}}\right\} - \gamma\operatorname{erf}\left(\frac{r}{2^{3/2}\gamma\sigma}\right) \ . \]

Let's start by setting the two functions inside the \(\min\) function equal to each other and solving for \(\sigma\):
\[ \gamma = \frac{r}{\sqrt{2 \pi \sigma^2}} \]
\[ \Rightarrow \sigma^2 = \frac{r}{\gamma^2 2\pi} \]
\[ \Rightarrow \sigma = \frac{r}{\gamma \sqrt{2\pi}} \ . \]
This is the point of intersection, hence the value of \(\sigma\) where the two functions inside the \(\min\) change dominance. For that value, 
$g( \frac{r}{\gamma \sqrt{2\pi}}) = \gamma(1 - \operatorname{erf}(\frac{\sqrt{\pi}}{2}))$ .

Now, for \(\sigma < \frac{r}{\gamma \sqrt{2\pi}}\), \(g(\sigma) = \gamma - \gamma\operatorname{erf}\left(\frac{r}{2^{3/2}\gamma\sigma}\right)\) .

Let's differentiate \(g(\sigma)\) in the this first region:
\begin{align*}
    g'(\sigma) &= 0 - \frac{d}{d\sigma} \left[ \gamma\operatorname{erf}\left(\frac{r}{2^{3/2}\gamma\sigma}\right) \right] \\
    & = \gamma \frac{r}{2^{3/2}\gamma} \frac{2}{\sqrt{\pi}}
    \exp\left( -\left(\frac{r}{2^{3/2}\gamma\sigma} \right)^2 \right) \\
    &= \frac{r}{\sqrt{2\pi}} \exp\left( -\left(\frac{r}{2^{3/2}\gamma\sigma} \right)^2 \right) \ .
\end{align*}
The supremum is obtained for $\sigma \to 0$, and the associated limit value for $g(\sigma)$ is $0$.

For the second region, \(\sigma > \frac{r}{\gamma \sqrt{2\pi}}\), \(g(\sigma) = \frac{r}{\sqrt{2 \pi \sigma^2}} - \gamma\operatorname{erf}\left(\frac{r}{2^{3/2}\gamma\sigma}\right)\).
Supremum is obtained for $\sigma  \to \frac{r}{\gamma \sqrt{2\pi}}$ as $g$ is a decreasing function of $\sigma$.
The associated limit value for $g(\sigma)$ is $\gamma(1 - \operatorname{erf}(\frac{\sqrt{\pi}}{2}))$.

Finally, taking $\gamma = L(s_k^r \circ f)$,  $\sigma^* = \frac{r}{L(s_k^r \circ f) \sqrt{2\pi}}$ gives maximum value for $g$ on all domain.

We get a similar result $\sigma^* = \frac{r}{L(s^r \circ f) \sqrt{\pi}}$ for $L(s^r \circ f)$.
\end{proof}


\subsection{Proof of Theorem~\ref{thm:phi_bound_lip}}
\label{proof:phi_bound_lip}

In the previous section, we derived a new radius for the smoothed classifier $\tilde{h}$ but usually RS approaches use the Lipschitz constant of the function $\Phi^{-1} \circ \tilde{h}$ and its associated certified radius.

Here we suppose that $\tilde{h} :  \R^d \mapsto \Delta^{c-1}_r$, how does it changes the Lipschitz constant of $\Phi^{-1} \circ \tilde{h}$ ?

\begin{lemma}
\label{lemma:lip_local}
Let $\tilde{h} : \R^d \mapsto \Delta^{c-1}_r$ be the smoothed classifier and $\Phi^{-1}$ the gaussian quantile function.
For an input $x \in \mathcal{X}$, the $\ell_2$-norm of the gradient of $\Phi^{-1} \circ {\tilde{h}_k}$ is bounded by:
    \begin{align*}
    \norm{\nabla \Phi^{-1} \circ {\tilde{h}}_k (x)}_2
    \leq  \frac{r}{\sigma} 
    \exp{\left( -\frac{1}{2}\left(\Phi^{-1}(\tilde{h}_k(x)/r)^2 - \Phi^{-1}(\tilde{h}_k(x))^2\right)\right)} \ .
\end{align*}
\end{lemma}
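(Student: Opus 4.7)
The plan is to reduce the bound to the classical $1/\sigma$-Lipschitz estimate for the Gaussian quantile composed with a smoothed $[0,1]$-valued function, via a simple rescaling trick, and then use the chain rule twice to pick up the ratio of Gaussian densities that appears in the statement.

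\textbf{Step 1 (rescaling to $[0,1]$).} Since $\tilde{h}(x) \in \Delta^{c-1}_r$, each component satisfies $\tilde{h}_k(x)\in[0,r]$. Hence $\tilde{h}_k/r$ is itself a smoothed classifier associated to the base function $(s_k^r \circ f)/r$ which takes values in $[0,1]$. This is the regime covered by the classical randomized smoothing Lipschitz bound (Salman et al.\ 2019, Lemma~1), which is exactly the special case $r=1$ of the first part of Theorem~\ref{thm:h_hat_bound_lip} composed with $\Phi^{-1}$.

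\textbf{Step 2 (invoke the $[0,1]$ bound).} Applied to $\tilde{h}_k/r$, this classical result yields that $\Phi^{-1}\!\circ(\tilde{h}_k/r)$ is $1/\sigma$-Lipschitz, so pointwise
\begin{align*}
\bigl\|\nabla\bigl(\Phi^{-1}(\tilde{h}_k(x)/r)\bigr)\bigr\|_2 \;\le\; \tfrac{1}{\sigma}.
\end{align*}
Expanding the left-hand side by the chain rule, and using $(\Phi^{-1})'(u) = 1/\phi(\Phi^{-1}(u))$ with $\phi$ the standard Gaussian density, gives
\begin{align*}
\frac{1}{r\,\phi\bigl(\Phi^{-1}(\tilde{h}_k(x)/r)\bigr)}\,\bigl\|\nabla \tilde{h}_k(x)\bigr\|_2 \;\le\; \frac{1}{\sigma},
\end{align*}
i.e.\ $\|\nabla \tilde{h}_k(x)\|_2 \le \tfrac{r}{\sigma}\,\phi\bigl(\Phi^{-1}(\tilde{h}_k(x)/r)\bigr)$.

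\textbf{Step 3 (chain rule on the unscaled composition).} Applying the chain rule to $\Phi^{-1}\circ\tilde{h}_k$ directly:
\begin{align*}
\bigl\|\nabla(\Phi^{-1}\circ\tilde{h}_k)(x)\bigr\|_2 \;=\; \frac{\|\nabla\tilde{h}_k(x)\|_2}{\phi\bigl(\Phi^{-1}(\tilde{h}_k(x))\bigr)}.
\end{align*}
Substituting the bound from Step~2 and cancelling the common $1/\sqrt{2\pi}$ factors in the ratio of Gaussian densities gives
\begin{align*}
\bigl\|\nabla(\Phi^{-1}\circ\tilde{h}_k)(x)\bigr\|_2 \;\le\; \frac{r}{\sigma}\cdot\frac{\phi(\Phi^{-1}(\tilde{h}_k(x)/r))}{\phi(\Phi^{-1}(\tilde{h}_k(x)))} \;=\; \frac{r}{\sigma}\exp\!\left(-\tfrac{1}{2}\bigl(\Phi^{-1}(\tilde{h}_k(x)/r)^2-\Phi^{-1}(\tilde{h}_k(x))^2\bigr)\right),
\end{align*}
which is the claimed inequality.

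\textbf{Anticipated difficulty.} There is no real obstacle; the only care needed is bookkeeping the scaling factor $1/r$ through both applications of the chain rule, and noting at Step~1 that the appropriate $[0,1]$-valued smoothed-classifier result can be invoked as a black box (either as stated in Salman et al., or recovered from Theorem~\ref{thm:h_hat_bound_lip} with $r=1$), so that we never need to re-derive the Stein-lemma/reflection argument.
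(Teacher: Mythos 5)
Your argument is correct and is, at bottom, the same reduction the paper uses: both proofs divide by $r$ to land in the $[0,1]$-valued regime of \cite{salman_provably_2020} and then recover the factor $\phi(\Phi^{-1}(\tilde h_k(x)/r))/\phi(\Phi^{-1}(\tilde h_k(x)))$, where $\phi$ is the standard Gaussian density. The only difference is the level at which the black box is invoked: the paper re-runs Stein's lemma, rescales the variational problem ($g\in[0,r]$, $\E g=p$) to the $[0,1]$ problem whose maximizer is a halfspace indicator, and only then divides by $\phi(\Phi^{-1}(p))$, whereas you invoke the packaged statement that $\Phi^{-1}\circ(\tilde h_k/r)$ is $1/\sigma$-Lipschitz and do two chain-rule computations. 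Your write-up is the more economical rendering of the same computation.

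One claim in your proposal is wrong, though it does not sink the proof because your primary citation suffices. The $1/\sigma$-Lipschitzness of $\Phi^{-1}\circ(\tilde h_k/r)$ is \emph{not} the special case $r=1$ of the first part of Theorem~\ref{thm:h_hat_bound_lip} composed with $\Phi^{-1}$. That theorem gives a \emph{uniform} gradient bound of the form $\min\{1/\sqrt{2\pi\sigma^2},\,L\}$, and feeding a uniform bound through the chain rule yields $\|\nabla(\Phi^{-1}\circ\tilde g)(x)\|_2\leq \|\nabla\tilde g(x)\|_2/\phi(\Phi^{-1}(\tilde g(x)))$, which blows up as $\tilde g(x)\to 0$ or $1$. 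What Step~2 actually requires is the \emph{value-dependent} bound $\|\nabla\tilde g(x)\|_2\leq \frac{1}{\sigma}\,\phi(\Phi^{-1}(\tilde g(x)))$ for $[0,1]$-valued smoothed functions; that is Lemma~2 (not Lemma~1) of \cite{salman_provably_2020}, proved precisely by the halfspace-indicator argument the paper reproduces. Cite that result as your black box and drop the Theorem~\ref{thm:h_hat_bound_lip} fallback. (Both your statement and the paper's implicitly need $\tilde h_k(x)<1$ for $\Phi^{-1}(\tilde h_k(x))$ to be defined when $r>1$; that caveat is inherited from the lemma statement and is not yours to fix.)
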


\begin{proof}
In the same manner as the proof of \cite{salman_provably_2020}, let us assume that $\sigma=1$.

\begin{align*}
    \norm{\nabla \Phi^{-1} \circ {\tilde{h}}_k (x)}_2 &= \sup_{\substack{v\in \R^d: \|v\|=1}} v^\top \nabla \Phi^{-1}(\tilde{h}_k(x)) \\
    &= \sup_{\substack{v\in \R^d: \|v\|=1}}\frac{v^\top  \nabla \tilde{h}_k(x)} {\Phi^\prime(\Phi^{-1}(\tilde{h}_k(x)))}
\end{align*}

Using that expression, we can derive an upper bound on the Lipschitz constant of $\Phi^{-1} \circ \tilde{h}_k$.

The denominator $\Phi^\prime(\Phi^{-1}(\tilde{h}_k(x))) = \frac{1}{\sqrt{2 \pi} } \exp{\left( -\frac{1}{2}\left(\Phi^{-1}(\tilde{h}_k(x))^2\right)\right)}$. 

By Stein's Lemma, we can express the numerator as
\begin{align*}
    v^\top  \nabla \tilde{h}_k(x) = \E_{\delta\sim {\cal N}(0, I)}[ v^\top \delta h_k(x+\delta)] \ .
\end{align*}
We need to bound this quantity with the constraint that $\E_{\delta\sim {\cal N}(0, I)}[h_k(x+\delta)] = p$ and $h_k(x) \in [0, r]$, it sums to following problem, with $h_k(x+z) = g(z)$:
\begin{align}
\label{eq:problem_one}
&\sup_{\substack{g \\ v\in \R^d: \|v\|=1}} ~~ \E_{\delta\sim {\cal N}(0, I)}\left[v^\top\delta g(x) \right] \\
&\text{s.t} \ \ \ g(x) \in [0, r] ~~ \text{and} ~~ \E_{\delta\sim {\cal N}(0,  I)}\left[g(x)\right] = p \ . \nonumber
\end{align}

We can solve it for $g^\prime = g / r$:
\begin{align*}
&\sup_{\substack{g^\prime \\ v\in \R^d: \|v\|=1}} ~~ r \E_{\delta\sim {\cal N}(0, I)}\left[v^\top\delta g^\prime(x) \right] \\
&\text{s.t} \ \ \ g^\prime(x) \in [0, 1] ~~ \text{and} ~~ \E_{\delta\sim {\cal N}(0,  I)}\left[g^\prime(x)\right] = p/r \ .
\end{align*}

We recognize problem solved in \cite[Appendix A, Lemma 2]{salman_provably_2020}, which has for solution 
${g^\prime}^*(z) = \mathds{1}_{v^\top z > \Phi^{-1}(p/r)}$.
Thus the problem (\ref{eq:problem_one}) has for solution $g^*(z) = r \mathds{1}_{v^\top z > \Phi^{-1}(p/r)}$.

Plugging $g^*$ in the numerator we obtain
\begin{align*}
    \E_{\delta\sim {\cal N}(0, I)}[v^\top \delta g^*(\delta)] 
    &= r \ \E_{Z \sim {\cal N}(0, 1)}[ Z \ \mathds{1}_{Z > \Phi^{-1}(p/r)}] \\
    &= \frac{r}{\sqrt{2 \pi }} \int_{-\Phi^{-1}(p/r)}^\infty t \exp{-t^2/2} dt \\ 
    &= \frac{r}{\sqrt{2 \pi}} \exp{\left(-\frac{1}{2} \Phi^{-1}(\tilde{h}_k(x)/r)^2\right)} \ .
\end{align*}
Finally,
\begin{align*}
    \norm{\nabla \Phi^{-1} \circ {\tilde{h}}_k (x)}_2 \leq r \exp{\left(-\frac{1}{2} (\Phi^{-1}(\tilde{h}_k(x)/r)^2 - \Phi^{-1}(\tilde{h}_k(x))^2)\right)} \ .
\end{align*}

To obtain the result for any $\sigma$, we can apply the result to $\tilde{h}_k(x / \sigma)$ and this implies that
\begin{align*}
    \norm{\nabla \Phi^{-1} \circ {\tilde{h}}_k (x)}_2 \leq \frac{r}{\sigma} \exp{\left(-\frac{1}{2} (\Phi^{-1}(\tilde{h}_k(x)/r)^2 - \Phi^{-1}(\tilde{h}_k(x))^2)\right)} \ .
\end{align*}
\end{proof}

Using previous Lemma~\ref{lemma:lip_local} we derive the following Theorem,
\begin{thm*}
    Let $\tilde{h} : \R^d \mapsto \Delta^{c-1}_r$ be the smoothed classifier and $\Phi^{-1}$ the gaussian quantile function.
For an input $x \in \mathcal{X}$, 
and $\mathcal{B} = B_2\left(\tilde{h}(x), \epsilon L(\tilde{h})\right)$
the $\ell_2$-norm of the local Lipschitz constant of $\Phi^{-1} \circ {\tilde{h}_k}$ is bounded by:

\begin{align*}
    L\left(\nabla \Phi^{-1} \circ {\tilde{h}}_k, ~ \mathcal{B} \right)
    \leq  \frac{r}{\sigma} 
     \max_{p \in \mathcal{B} } \left\{
     \exp{\left( -\frac{1}{2}\left(\Phi^{-1}(p/r)^2 - \Phi^{-1}(p)^2\right)\right)} 
     \right\}
     \ .
\end{align*}
\end{thm*}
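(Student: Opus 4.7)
The plan is to reduce the local Lipschitz bound to a pointwise gradient bound, and then to bound the gradient pointwise via Stein's lemma plus a variational argument. Concretely, since $\tilde{f}_k$ is smooth (it is a Gaussian convolution, hence $C^\infty$) and $\Phi^{-1}$ is smooth on $(0,1)$, the composition $\Phi^{-1} \circ \tilde{f}_k$ is differentiable on $\mathcal{B}$, so its local Lipschitz constant on the open ball $\mathcal{B}$ satisfies $L(\Phi^{-1} \circ \tilde{f}_k, \mathcal{B}) \leq \sup_{x' \in \mathcal{B}} \|\nabla (\Phi^{-1} \circ \tilde{f}_k)(x')\|_2$. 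So it suffices to bound this supremum.

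For a pointwise bound at $x' \in \mathcal{B}$, I would apply the chain rule to get
\begin{equation*}
\nabla(\Phi^{-1} \circ \tilde{f}_k)(x') = \frac{\nabla \tilde{f}_k(x')}{\Phi'(\Phi^{-1}(\tilde{f}_k(x')))},
\end{equation*}
where the denominator is $\frac{1}{\sqrt{2\pi}} \exp(-\tfrac{1}{2}\Phi^{-1}(\tilde{f}_k(x'))^2)$. For the numerator, I would use Stein's Lemma~\ref{lemma:stein} to write $v^\top \nabla \tilde{f}_k(x') = \tfrac{1}{\sigma^2} \mathbb{E}_{\delta \sim \mathcal{N}(0,\sigma^2 I)}[v^\top \delta \, (s_k^r \circ f)(x' + \delta)]$. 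The key step is then the constrained variational problem (analogous to Lemma~2 of \cite{salman_provably_2020}): among all measurable $g$ with values in $[0, r]$ and prescribed mean $\mathbb{E}[g(\delta)] = \tilde{f}_k(x') =: p$, maximize $\mathbb{E}[v^\top \delta \, g(\delta)]$. By rescaling $g' = g/r$ the problem reduces to the $[0,1]$-valued case with mean $p/r$, whose optimizer is the half-space indicator $g'^*(z) = \mathbf{1}_{v^\top z > \Phi^{-1}(p/r)}$. A direct Gaussian integral evaluates the optimum to $\frac{r}{\sqrt{2\pi}} \exp(-\tfrac{1}{2}\Phi^{-1}(p/r)^2)$ (after rescaling to general $\sigma$ by considering $\tilde{f}_k(\cdot/\sigma)$, which contributes the factor $1/\sigma$).

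Combining the chain rule with these two pieces cancels the $\sqrt{2\pi}$'s and yields the pointwise bound
\begin{equation*}
\|\nabla (\Phi^{-1} \circ \tilde{f}_k)(x')\|_2 \leq \frac{r}{\sigma} \exp\!\left(-\tfrac{1}{2}\bigl(\Phi^{-1}(\tilde{f}_k(x')/r)^2 - \Phi^{-1}(\tilde{f}_k(x'))^2\bigr)\right),
\end{equation*}
which is exactly Lemma~\ref{lemma:lip_local}. To upgrade this to a local Lipschitz constant on $\mathcal{B}$, I note that by the Lipschitz property of $\tilde{f}$, the image $\tilde{f}(\mathcal{B})$ is contained in $B_2(\tilde{f}(x), \epsilon L(\tilde{f}))$. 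Taking the supremum of the pointwise bound over $x' \in \mathcal{B}$ therefore reduces to taking a maximum of the exponential term over $p \in B_2(\tilde{f}(x), \epsilon L(\tilde{f}))$, which gives the stated bound.

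The main obstacle I expect is the variational step: correctly handling the $[0,r]$ range constraint (instead of $[0,1]$) and verifying that the half-space indicator remains optimal after rescaling, together with carefully passing the factor $1/\sigma$ through the scaling $\tilde{f}_k(x/\sigma)$. Once that optimizer is identified, the remaining manipulations are elementary Gaussian integrals and a straightforward application of the mean value theorem (via the sup of $\|\nabla\|_2$) to convert the pointwise bound into a local Lipschitz bound over the ball.
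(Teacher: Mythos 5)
Your proposal is correct and follows essentially the same route as the paper: a pointwise gradient bound via the chain rule, Stein's lemma, and the rescaled variational problem $g' = g/r$ reducing to the $[0,1]$-valued half-space-indicator optimizer of \cite{salman_provably_2020}, followed by the $\sigma$-rescaling and a supremum over the ball using the Lipschitz property of $\tilde{f}$. This is exactly how the paper proves Lemma~\ref{lemma:lip_local} and then deduces the theorem.
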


\newpage


\section{LVM-RS Algorithm}
\label{appendix:algorithm_lvmrs}

\begin{algorithm}[h]
\caption{LVM-RS ($f$, $\sigma$, $x$, $n_0$, $n$)
}
\begin{algorithmic}[1]
\State scores\_$n_0$ $\gets$ $\mathrm{SampleScores}(f, x, n_0, \sigma)$ \quad  \quad \text{// validation set, dimension $n_0 \times c$}
\State scores\_$n$ $\gets$ $\mathrm{SampleScores}(f, x, n, \sigma)$ \quad \quad \quad \text{// certification set, dimension $n \times c$}
\State \textbf{for} temperature $t \in [t_{\mathrm{lower}}, t_{\mathrm{upper}} ]$
    \State \quad \textbf{for} simplex map $s \in \mathcal{S}$
             \State \quad\quad $\Bar{p}_{s^t} = \frac{1}{n_0} \sum_{i=1}^{n_0} s^t(\text{scores\_}{n_0}[i, :]) - \text{shift}(S_{n0}(s^t(\text{scores\_}{n_0})), 
             \alpha, n_0)$ \quad \text{// dimension $c$}
    \State $(s_*, t_*) = \argmax_{s, t} R_2(\Bar{p}_{s^t})$
    \State $\Bar{p}^* = \frac{1}{n_0} \sum_{i=1}^{n_0} {s_*}^{t_*}(\text{scores\_}{n}[i, :]) - \text{shift}(S_{n}({s_*}^{t_*}(\text{scores\_}{n})), \alpha, n)$ \quad \quad \quad \text{// dimension $c$}
    \State \textbf{return} prediction $\argmax_k \Bar{p}^*_k$ and certified radius $R_2(\Bar{p}^*)$
\end{algorithmic}
\label{algo:lvmrs}
\end{algorithm}

We recall that RS produces the smoothed classifier $\tilde{F}$ starting from sub classifier $f$, and for all $x \in \mathcal{X}$, it outputs a certified radius $R= R(\tilde{F}, x)$ and a prediction $\tilde{F}(x) = \hat{y}$, it is guaranteed that for all $x^\prime \in B_2(x, R), \tilde{F}(x^\prime) = \hat{y}$.

The difference with the LVM-RS procedure is that the choice of produced classifier $\tilde{F}$ depends on input $x$. Starting from a sub-classifier $f$, we generate an ensemble of smoothed classifiers $\{ \tilde{F}_s \}_s$. 
For an input $x \in \mathcal{X}$, the LVM-RS procedure selects a classifier $\tilde{F} \in \{ \tilde{F}_{s} \}_s$ that maximizes the margin-variance trade-off. We output for $x$ and $\tilde{F}$ a certified radius $R= R(\tilde{F}, x)$ and a prediction $\tilde{F}(x) = \hat{y}$, it is guaranteed that for all $x^\prime \in B_2(x, R), \tilde{F}(x^\prime) = \hat{y}$.

\newpage


\section{Ablation study}
\label{appendix:ablation_study}
This ablation study provides two comparisons:
\begin{itemize}
\item A comparison between corrected certified radii produced by Hoeddfing's and Bernstein's inequalities in Fig.~\ref{fig:comparison_concentration}. The Clopper-Pearson is not included as it is only applicable to binomial values.
\item A comparison between corrected certified radii produced by different simplex maps and temperatures in Fig.\ref{fig:comparison_projection_simplex}.
\end{itemize}

\vspace{1cm}

\begin{figure}[h]
    \centering
    \includegraphics[width=0.8\linewidth]{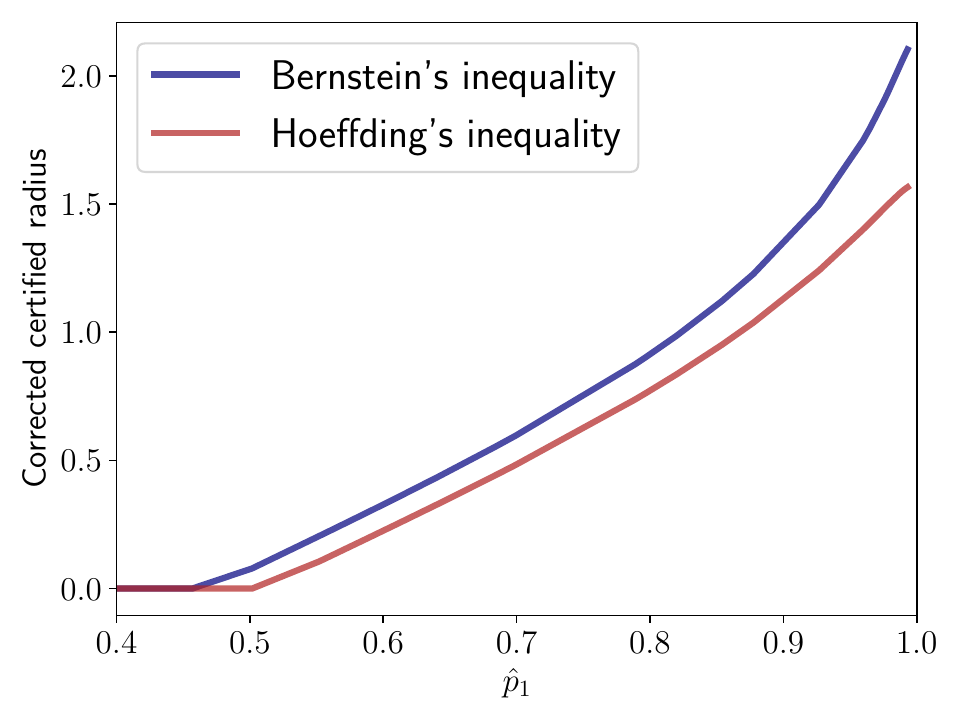}
    \caption{Comparison between corrected certified radii $R_2(\bar{p})$ produced by Bernstein's and Hoeffding's inequalities, for a random subset of $1000$ images of ImageNet dataset using RS with a smoothing noise $\sigma=1.0$. We use the ViT-denoiser baseline from \cite{carlini_certified_2023}.
    }
    \label{fig:comparison_concentration}
\end{figure}

\begin{figure}[h]
    \centering
    \includegraphics[width=0.8\textwidth]{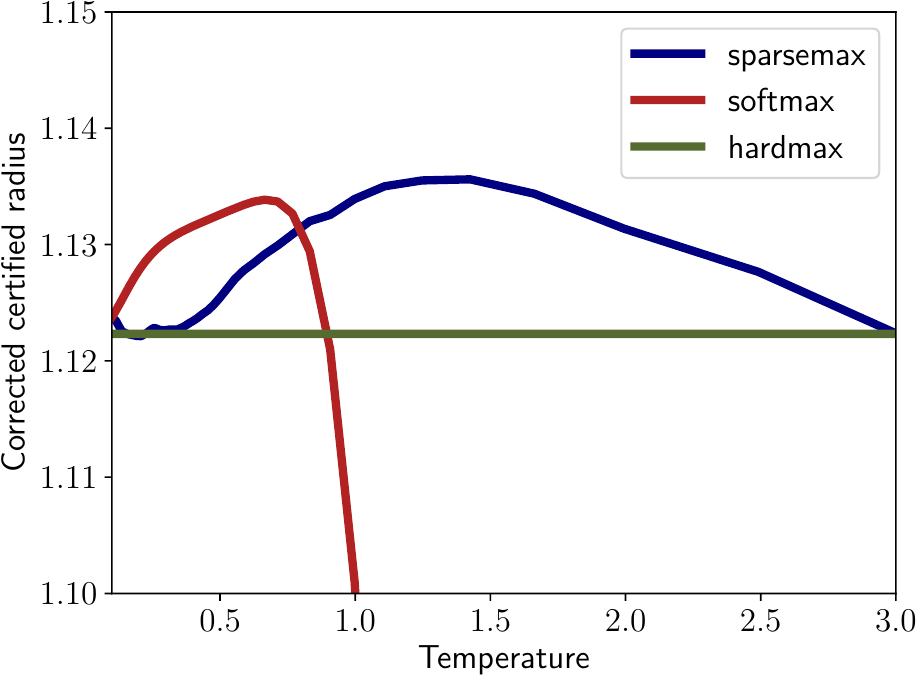}
    \caption{Comparison of the effect on corrected certified radii $R_2(\bar{p})$ of the choice of the simplex map $s$ and associated temperature $t$. Simplex maps considered are $s \in \{\mathrm{sparsemax}, \mathrm{softmax}, \mathrm{hardmax}\}$.
    The \emph{base subclassifier} is the one from \cite{carlini_certified_2023} and the corrected certified radii were generated with one image from ImageNet with smoothing variance $\sigma=1.0$.
    Radii are risk corrected with Empirical Bernstein inequality for a risk $\alpha=1\mathrm{e-}3$ and $n=10^4$.
    We see that by varying the temperature $t$, $\mathrm{softmax}$ and $\mathrm{sparsemax}$ can find a better solution than $\mathrm{hardmax}$ to the variance-margin trade-off.
    }
\label{fig:comparison_projection_simplex}
\end{figure}

\newpage


\section{Figures and tables for Experiment \ref{ssec:expe_lvm_rs}}
\label{appendix:expe_lvm_rs}

\begin{figure}[h]
    \centering
    \includegraphics[width=0.95\linewidth]{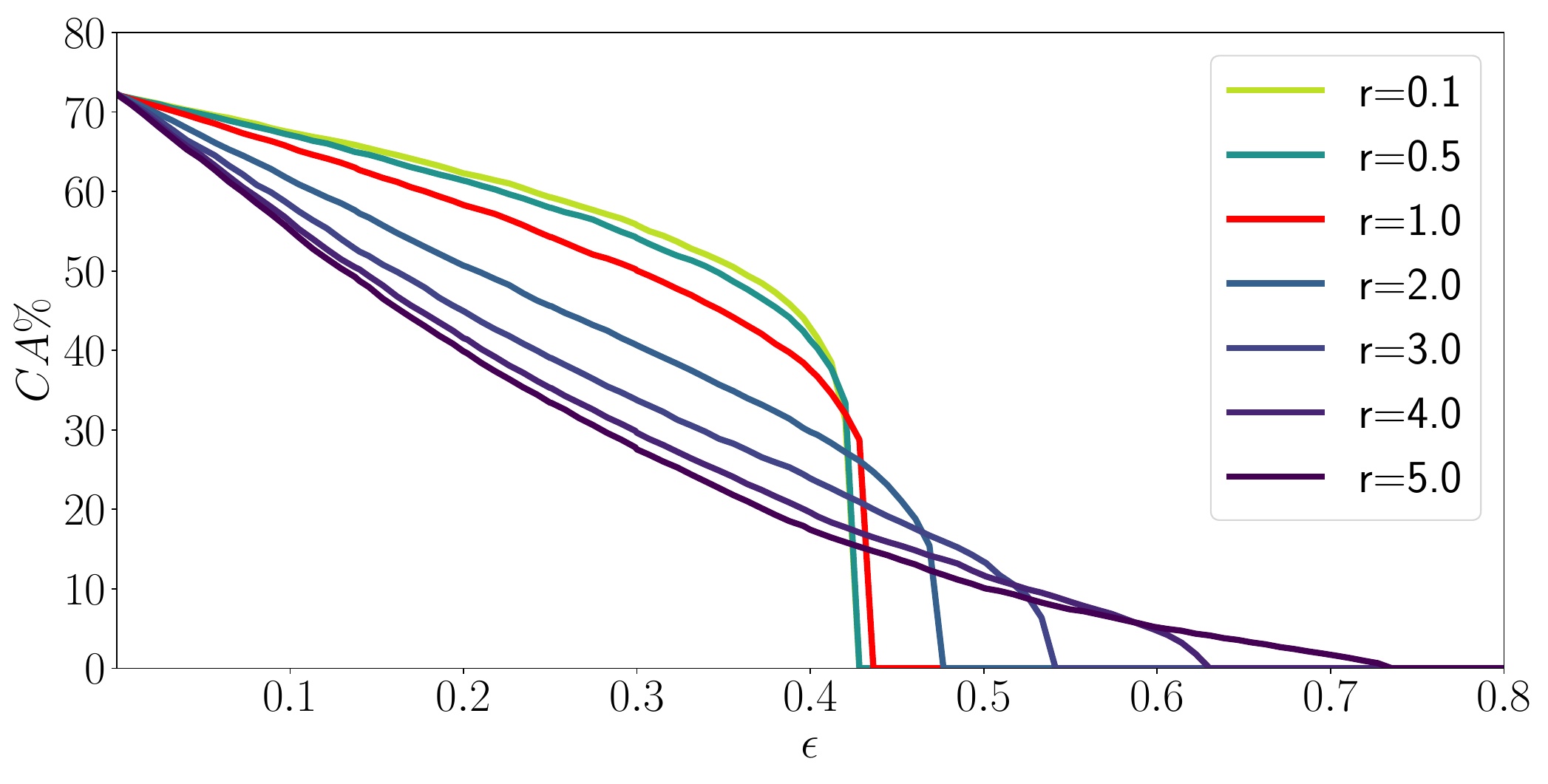}
    \caption{Certified accuracies ($CA$ in $\%$) with $R_1$ in function of levels of perturbation $r$ on CIFAR-10, for different simplex mass $r$. Number of samples is $n=10^4$ and risk $\alpha = 1e\text{-}3$.
    The case $r=1.0$ corresponds to the regular RS setting.
    }
    \label{fig:ca_for_different_r}
\end{figure}

\begin{figure}[h]
    \centering
    \includegraphics[width=1.0\textwidth]{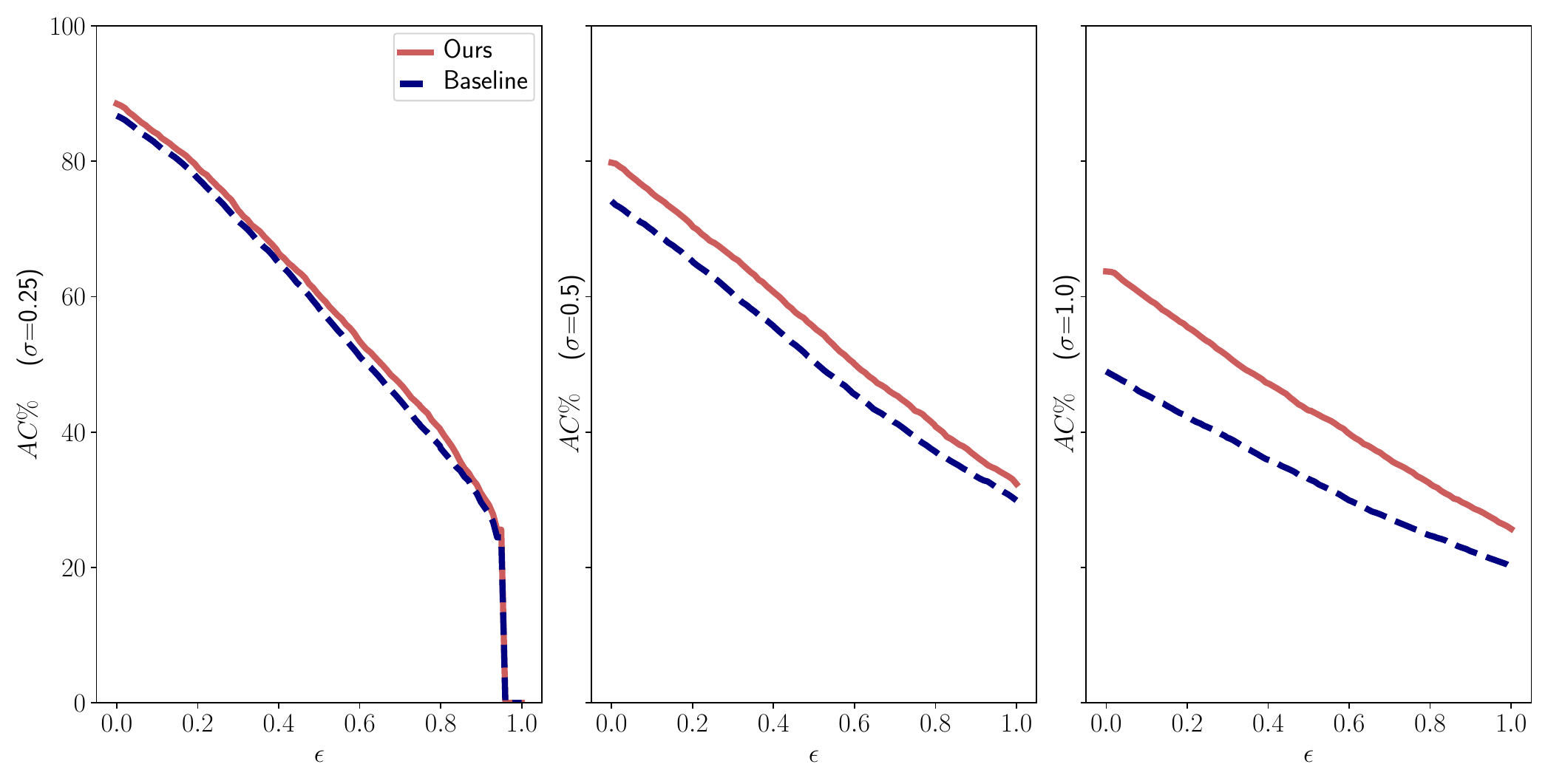}
    \caption{Certified accuracies ($CA$ in $\%$) in function of level of perturbations $\epsilon$ on CIFAR-10, for different noise levels $\sigma=\{0.25, 0.5, 1\}$. Number of samples is $n=10^5$ and risk $\alpha = 1e\text{-}3$.
    Our method is compared to the baseline chosen as in \cite{carlini_certified_2023}.}
\label{fig:certified_accuracy_for_different_eps_cifar10}
\end{figure}

\begin{figure}[h]
    \centering
    \includegraphics[width=1.0\textwidth]{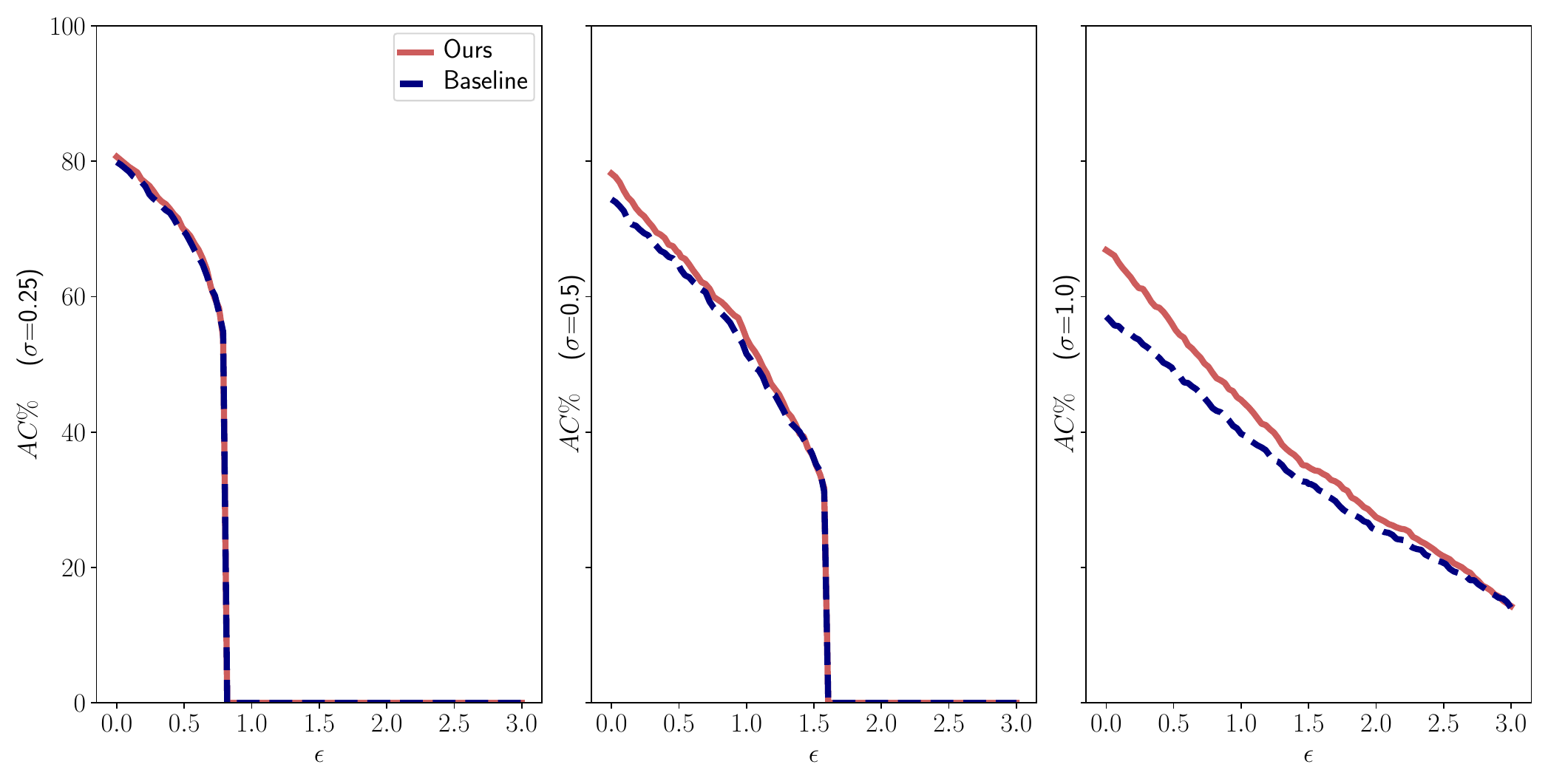}
    \caption{Certified accuracies ($CA$ in $\%$) in function of level of perturbations $\epsilon$ on ImageNet, for different noise levels $\sigma=\{0.25, 0.5, 1\}$. Number of samples is $n=10^4$ and risk $\alpha = 1e\text{-}3$.
    Our method is compared to the baseline chosen as in \cite{carlini_certified_2023}.}
\label{fig:certified_accuracy_for_different_eps_imagenet}
\end{figure}

\begin{table}[h]
\caption{Certified accuracies comparison for different perturbation $\epsilon$ values, for $n=10^4$ samples and $\alpha=1\mathrm{e-}3$. On ImageNet dataset. Here the baseline is a ResNet-150 from \cite{salman_provably_2020}.}
\centering
\begin{tabular}{lcccccccc}
\toprule
\multirow{2}[2]{*}{\textbf{Methods}} & \multicolumn{8}{c}{{\boldmath{}\textbf{Certified accuracy ($\varepsilon$)}\unboldmath{}}} \\
        \cmidrule{2-9}
 & 0.14 & 0.2 & 0.3 & 0.4 & 0.5 & 0.6 & 0.7 & 0.8 \\
\midrule
{\cite{salman_provably_2020}} & 74.49 & 73.08 & 69.84 & 66.41 & 62.42 & 57.75 & 51.24 & 0.0 \\
\textbf{LVM-RS (ours)} & \textbf{76.77} & \textbf{74.99} & \textbf{71.26} & \textbf{67.55} & \textbf{63.43} & \textbf{58.59} & \textbf{51.39} & 0.0 \\
\bottomrule
\end{tabular}
\label{tab:accuracy_comparison_transposed}
\end{table}

\begin{table}[h]
\label{tab:ca_sigma_025_cifar10}
\caption{Certified accuracy for \(\sigma = 0.25\) on CIFAR-10, for risk $\alpha=1\mathrm{e-}3$ and $n=10^5$ samples.}
\centering
\resizebox{\textwidth}{!}{%
\begin{tabular}{lcccccccccccc}
\toprule
\multirow{2}[2]{*}{\textbf{Methods}} & \multicolumn{11}{c}{{\boldmath{}\textbf{Certified accuracy ($\varepsilon$)}\unboldmath{}}} \\
\cmidrule{2-12}
& 0.0 & 0.14 & 0.2 & 0.25 & 0.3 & 0.4 & 0.5 & 0.6 & 0.75 & 0.8 & 1.0 \\
\midrule
\cite{carlini_certified_2023} & 86.72 & 80.73 & 77.47 & 74.41 & 71.15 & 65.01 & 58.25 & 51.15 & 40.96 & 37.6 & 0.0 \\
\textbf{LVM-RS (ours)} & \textbf{88.49} & \textbf{82.15} & \textbf{79.06} & \textbf{76.21} & \textbf{72.73} & \textbf{66.41} & \textbf{60.22} & \textbf{53.41} & \textbf{43.76} & \textbf{40.27}
& 0.0 \\
\bottomrule
\end{tabular}
}
\end{table}

\begin{table}[h]
\label{tab:ca_sigma_05_cifar10}
\caption{Certified accuracy for \(\sigma = 0.5\) on CIFAR-10, for risk $\alpha=1\mathrm{e-}3$ and $n=10^5$ samples.}
\centering
\resizebox{\textwidth}{!}{%
\begin{tabular}{lcccccccccccc}
\toprule
\multirow{2}[2]{*}{\textbf{Methods}} & \multicolumn{11}{c}{{\boldmath{}\textbf{Certified accuracy ($\varepsilon$)}\unboldmath{}}} \\
\cmidrule{2-12}
& 0.0 & 0.14 & 0.2 & 0.25 & 0.3 & 0.4 & 0.5 & 0.6 & 0.75 & 0.8 & 1.0 \\
\midrule
\cite{carlini_certified_2023} & 74.11 & 67.99 & 65.22 & 62.89 & 60.38 & 55.67 & 50.43 & 45.59 & 39.26 & 37.11 & 29.91 \\
\textbf{LVM-RS (ours)} & \textbf{79.79} & \textbf{73.45} & \textbf{70.41} & \textbf{68.04} & \textbf{65.8} & \textbf{60.71} & \textbf{55.48} & \textbf{50.07} & \textbf{43.13} & \textbf{40.83} & \textbf{32.35} \\
\bottomrule
\end{tabular}
}
\end{table}

\begin{table}[h]
\label{tab:ca_sigma_1_cifar10}
\caption{Certified accuracy for \(\sigma = 1\) on CIFAR-10, for risk $\alpha=1\mathrm{e-}3$ and $n=10^5$ samples.}
\centering
\resizebox{\textwidth}{!}{%
\begin{tabular}{lcccccccccccc}
\toprule
\multirow{2}[2]{*}{\textbf{Methods}} & \multicolumn{11}{c}{{\boldmath{}\textbf{Certified accuracy ($\varepsilon$)}\unboldmath{}}} \\
\cmidrule{2-12}
& 0.0 & 0.14 & 0.2 & 0.25 & 0.3 & 0.4 & 0.5 & 0.6 & 0.75 & 0.8 & 1.0 \\
\midrule
\cite{carlini_certified_2023} & 48.97 & 44.24 & 42.26 & 40.76 & 39.15 & 35.91 & 33.08 & 29.92 & 25.97 & 24.72 & 20.09 \\
\textbf{LVM-RS (ours)} & \textbf{63.72} & \textbf{57.99} & \textbf{55.54} & \textbf{53.4} & \textbf{51.23} & \textbf{47.19} & \textbf{43.19} & \textbf{39.76} & \textbf{34.27} & \textbf{32.35} & \textbf{25.71} \\
\bottomrule
\end{tabular}
}
\end{table}

\begin{table}[h]
\label{tab:ca_sigma_025_imagnet}
\caption{Certified accuracy for \(\sigma = 0.25\) on ImageNet, for risk $\alpha=1\mathrm{e-}3$ and $n=10^4$ samples.}
\centering
\begin{tabular}{lccccccc}
\toprule
\multirow{2}[2]{*}{\textbf{Methods}} & \multicolumn{6}{c}{{\boldmath{}\textbf{Certified accuracy ($\varepsilon$)}\unboldmath{}}} \\
\cmidrule{2-7}
& 0.0 & 0.5 & 1.0 & 1.5 & 2 & 3 \\
\midrule
\cite{carlini_certified_2023} & 79.88 & 69.57 & 0.0 & 0.0 & 0.0&  0.0 \\
\textbf{LVM-RS (ours)} & \textbf{80.66} & \textbf{69.84} & 0.0 & 0.0 & 0.0 & 0.0 \\
\bottomrule
\end{tabular}
\end{table}

\begin{table}[h]
\label{tab:ca_sigma_05_imagnet}
\caption{Certified accuracy for \(\sigma = 0.5\) on ImageNet, for risk $\alpha=1\mathrm{e-}3$ and $n=10^4$ samples.}
\centering
\begin{tabular}{lccccccc}
\toprule
\multirow{2}[2]{*}{\textbf{Methods}} & \multicolumn{6}{c}{{\boldmath{}\textbf{Certified accuracy ($\varepsilon$)}\unboldmath{}}} \\
\cmidrule{2-7}
& 0.0 & 0.5 & 1.0 & 1.5 & 2 & 3 \\
\midrule
\cite{carlini_certified_2023} & 74.37 & 64.56 & 51.55 & \textbf{36.04} & 0.0 & 0.0 \\
\textbf{LVM-RS (ours)} & \textbf{78.18} & \textbf{66.47} & \textbf{53.85} & \textbf{36.04} & 0.0 & 0.0 \\
\bottomrule
\end{tabular}
\end{table}

\begin{table}[h]
\label{tab:ca_sigma_1_imagnet}
\caption{Certified accuracy for \(\sigma = 1\) on ImageNet, for risk $\alpha=1\mathrm{e-}3$ and $n=10^4$ samples.}
\centering
\begin{tabular}{lccccccc}
\toprule
\multirow{2}[2]{*}{\textbf{Methods}} & \multicolumn{6}{c}{{\boldmath{}\textbf{Certified accuracy ($\varepsilon$)}\unboldmath{}}} \\
\cmidrule{2-7}
& 0.0 & 0.5 & 1.0 & 1.5 & 2 & 3 \\
\midrule
\cite{carlini_certified_2023} & 57.06 & 49.05 & 39.74 & 32.33 & 25.53 & 14.01 \\
\textbf{LVM-RS (ours)} & \textbf{66.87} & \textbf{55.56} & \textbf{44.74} & \textbf{34.83} & \textbf{27.43} & \textbf{14.31} \\
\bottomrule
\end{tabular}
\end{table}

\newpage

\end{document}